\documentclass{article}

\usepackage{microtype}
\usepackage{graphicx}
\usepackage{subcaption}
\usepackage{booktabs} % for professional tables

\usepackage{hyperref}

\usepackage[preprint]{icml2026}

\usepackage{amsmath}
\usepackage{amssymb}
\usepackage{mathtools}
\usepackage{amsthm}

\usepackage{xcolor} 
\usepackage{tcolorbox}
\definecolor{myblue}{rgb}{0.854,0.910,0.98}
\definecolor{myred}{rgb}{1,0.8,0.8}

\usepackage[table,xcdraw]{xcolor}
\usepackage{multirow}
\usepackage{placeins}

\usepackage{mdframed}
\usepackage[capitalize,noabbrev]{cleveref}

\theoremstyle{plain}
\newtheorem{theorem}{Theorem}[section]

\theoremstyle{definition}

\newtheorem{assumption}[theorem]{Assumption}
\theoremstyle{remark}

\usepackage[textsize=tiny]{todonotes}

\icmltitlerunning{Judge, Then Drive: A Critic-Centric Vision Language Action Framework for Autonomous Driving}

\begin{document}

\twocolumn[
  \icmltitle{\raisebox{-1.0ex}{\includegraphics[width=0.06\textwidth]{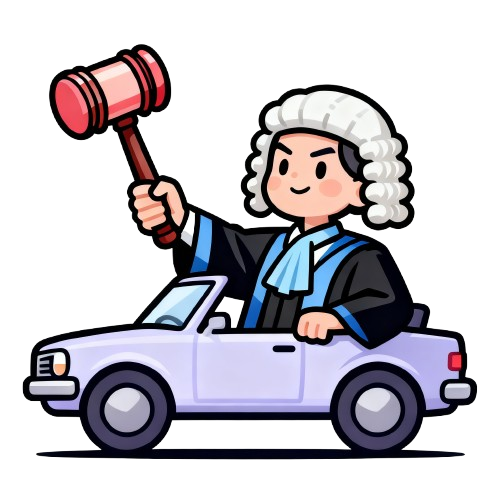}} Judge, Then Drive: A Critic-Centric Vision Language Action Framework for Autonomous Driving}

  % It is OKAY to include author information, even for blind submissions: the
  % style file will automatically remove it for you unless you've provided
  % the [accepted] option to the icml2026 package.

  % List of affiliations: The first argument should be a (short) identifier you
  % will use later to specify author affiliations Academic affiliations
  % should list Department, University, City, Region, Country Industry
  % affiliations should list Company, City, Region, Country

  % You can specify symbols, otherwise they are numbered in order. Ideally, you
  % should not use this facility. Affiliations will be numbered in order of
  % appearance and this is the preferred way.
  \icmlsetsymbol{equal}{*}

  \begin{icmlauthorlist}
    \icmlauthor{Lijin Yang}{equal,yyy}
    \icmlauthor{Jianing Huang}{equal,yyy}
    \icmlauthor{Zhongzhan Huang}{equal,yyy}
    \icmlauthor{Shu Liu}{yyy}
    \icmlauthor{Hao Yang}{yyy}
  \end{icmlauthorlist}

  \icmlaffiliation{yyy}{Bosch Research}

  \icmlcorrespondingauthor{Lijin Yang}{yli6sgh@bosch.com}
  % \icmlcorrespondingauthor{Firstname2 Lastname2}{first2.last2@www.uk}

  % You may provide any keywords that you find helpful for describing your
  % paper; these are used to populate the "keywords" metadata in the PDF but
  % will not be shown in the document
  \icmlkeywords{Machine Learning, ICML}

  \vskip 0.3in
]

% this must go after the closing bracket ] following \twocolumn[ ...

% This command actually creates the footnote in the first column listing the
% affiliations and the copyright notice. The command takes one argument, which
% is text to display at the start of the footnote. The \icmlEqualContribution
% command is standard text for equal contribution. Remove it (just {}) if you
% do not need this facility.

% Use ONE of the following lines. DO NOT remove the command.
% If you have no special notice, KEEP empty braces:
\printAffiliationsAndNotice{\icmlEqualContribution}  % no special notice (required even if empty)
% Or, if applicable, use the standard equal contribution text:
% \printAffiliationsAndNotice{\icmlEqualContribution}

\begin{abstract}
Recent advances in vision language action (VLA) models have shown remarkable potential for autonomous driving by directly mapping multimodal inputs to control signals. However, previous VLA-based methods have not explicitly exploited the critic capability of VLAs to refine driving decisions, even though such capability has been well demonstrated in other LLM-based domains, thereby limiting their performance in complex closed-loop scenarios. In this work, we present a theoretically inspired two-stage framework, CriticVLA, which extends the role of VLAs from acting to judging. CriticVLA first generates a rough trajectory and then refines it through multimodal evaluation and single-step optimization guided by a VLA-based critic, yielding higher-quality driving behaviors. To support this process, we construct a large-scale synthetic dataset of 12.9 million annotated trajectories covering diverse driving scenarios, which enhances the critic’s reasoning and refinement abilities. 
Extensive closed-loop experiments on the Bench2Drive benchmark show that CriticVLA significantly surpasses state-of-the-art baselines, achieving a 73.33\% total success rate and delivering about 30\% improvement in challenging scenarios. 
\end{abstract}

\section{Introduction}
\label{sec:intro}

\begin{figure}[t]
  \centering
  % \fbox{\rule{0pt}{2in} \rule{0.9\linewidth}{0pt}}
   \includegraphics[width=1.0\linewidth]{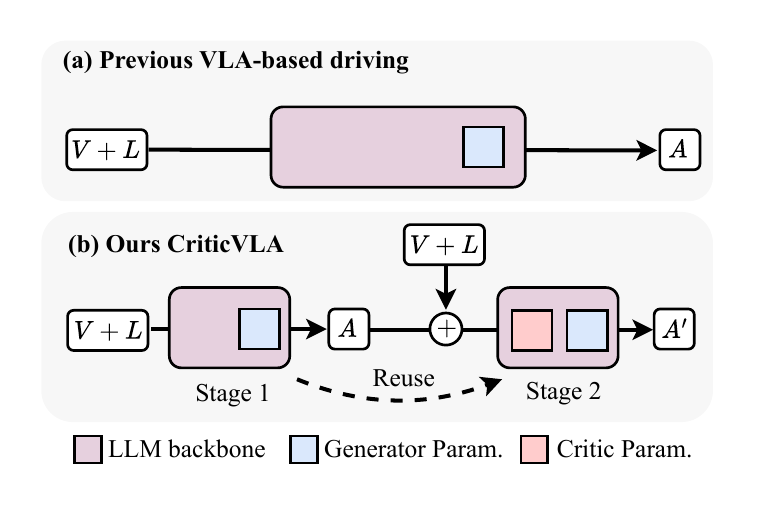}
\vspace{-0.6cm}
   \caption{Conceptual diagram of the VLA-based autonomous driving model (ADM). (a) The common paradigm of existing VLA-based ADM. (b) Our CriticVLA: in Stage-1 a VLA-based ADM generates a rough trajectory $A$ without language output, in Stage-2 the same VLA is reused to generate judgment and refinement to obtain final action $A'$. %CriticVLA explores the critic ability of VLA to refine rough trajectory generated by ADM.
   }
   \label{fig:conc}
   \vspace{-0.3cm}
\end{figure}
% \vspace{-0.3cm}
Recent advances in vision language action (VLA) models have shown remarkable potential for autonomous driving by directly mapping multimodal sensory inputs and linguistic instructions to driving actions \cite{jiang2025survey,ma2024surveyvla, sapkota2025vla, zhou2024vision,jia2024bench2drive, neurips2024drivingdojo,nuscenes2019, wilson2023argoverse2, waymo2024open,chi2024covla,fu2025orion,zhang2025safeauto,jiang2025diffvla}. Leveraging large-scale pretraining and unified multimodal representations, VLAs promise to integrate perception, reasoning, and control within a single framework, paving the way toward more interpretable and instruction-following Autonomous Driving Models (ADMs).

However, most existing VLA-based approaches still treat the model purely as an action generator that maps multimodal inputs directly to control signals. This generation-only paradigm overlooks the  \emph{evaluative} capability of the VLA’s LLM backbone, namely its ability to reason about dynamic scenes and provide constructive feedback, a capability that has been extensively demonstrated in other domains~\cite{zheng2023judging,zhong2024moextend,huang2025minilongbench,wang2023chatgpt,chiang2023closer,wang2025llava}. In safety-critical situations such as highway merging, unprotected turns, or negotiating with aggressive vehicles, the lack of an explicit critic limits the model’s capacity to detect potential risks and refine its own actions, thereby hindering robust closed-loop driving.

To address this gap, we propose \textbf{CriticVLA}, a theoretically inspired two-stage framework based on a \textit{“judge, then drive’’} philosophy. Instead of directly outputting control signals (Figure~\ref{fig:conc}a), CriticVLA first produces an initial rough trajectory (Figure~\ref{fig:conc}b). It then deploys the same VLA model, now as a multimodal critic, to systematically evaluate this plan given the current scene and generate refinements. Concretely, the critic generates structured natural-language risk analyses and action suggestions, which are consumed by a refinement head to produce high-quality final action. %We further provide a formal analysis in Section~\ref{sec:method}, showing that, under mild Lipschitz assumptions, this critic-guided refinement guarantees a multiplicative improvement of action quality in one step and admits monotonic improvement and convergence over multiple iterations.

We further construct \textbf{CriticDrive}, a large-scale synthetic dataset comprising 12.9 million annotated trajectories under diverse driving conditions to enhance the critic’s evaluative and refinement abilities. CriticDrive aggregates failed first-stage trajectories and perturbed ground-truth trajectories, providing structured annotations that include six key risk types, natural-language critiques, and refined trajectories. This design explicitly aligns perception, language, and action modalities and equips the VLA with strong critic capabilities beyond pure action generation.

In Section~\ref{sec:exp}, we conduct extensive closed-loop experiments on the challenging Bench2Drive~\cite{jia2024bench2drive} benchmark. CriticVLA significantly outperforms existing state-of-the-art ADMs, \textbf{achieving a 73.33\% total success rate and delivering about 30\% improvement in challenging scenarios}, while maintaining competitive driving efficiency. Multi-ability analysis further shows consistent gains in complex capabilities such as overtaking, merging, and handling non-signalized intersections, highlighting our \textit{“judge, then drive''} paradigm in redefining how VLA models reason about and improve driving behavior.

In summary, our main contributions are threefold: 
\begin{itemize}
    \item We formalize the importance of the critic role in VLA-based autonomous driving and derive a theoretically inspired two-stage paradigm, termed CriticVLA;
    \item We introduce CriticDrive, a large-scale synthetic dataset with 12.9 million annotated trajectories designed to enhance the critic model’s reasoning and refinement abilities;
    \item We demonstrate the superior closed-loop performance and applicability of CriticVLA on Bench2Drive, showing results consistent with our theoretical findings.
\end{itemize}

\section{Related Work}
\label{sec:related}

\noindent\textbf{VLA in Autonomous Driving.} VLA models unify perception, language-conditioned reasoning, and control in a single model, improving decision making under ambiguous observations and diverse traffic contexts \cite{ma2024surveyvla,sapkota2025vla}. Enabled by large-scale multimodal pretraining \cite{oquab2023dinov2,liu2025bridgedrive}, they show strong generalization across benchmarks such as Bench2Drive, nuScenes, Argoverse2, and Waymo \cite{nuscenes2019,wilson2023argoverse2,waymo2024open,jia2024bench2drive}. Recent progress further incorporates instruction grounding \cite{li2023driveLM}, instruction-to-trajectory computation \cite{yuan2024rag,fu2025orion}, interpretable reasoning \cite{wang2024driveCoT,cui2025chain}, and diffusion or hybrid control \cite{jiang2025diffvla}, with growing emphasis on safety and instruction awareness \cite{kim2024openvla,zhang2025safeauto}. In this work, we build on this line by using the same VLA as an explicit critic to refine driving signals.

\noindent\textbf{The Critic ability of LLM.} Many traditional automatic metrics, such as BLEU~\cite{papineni2002bleu}, ROUGE~\cite{lin2004rouge}, BERTScore~\cite{zhangbertscore}, and BARTScore~\cite{yuan2021bartscore}, are limited for open-ended evaluation and can miss nuanced qualities like helpfulness or fairness~\cite{sun2022bertscore,zhu2024starling}. With stronger LLMs~\cite{achiam2023gpt,jaech2024openai,zhong2024let,huang2025causality,huang2025routereval,zhong2022cem,liu2025associam}, the ``LLM as a judge'' paradigm has become popular \cite{zheng2023judging,wang2023chatgpt,chiang2023closer,wang2025llava,lee2024prometheus,chen2024mllm,pan2024human}. Prior work shows that, with structured prompts or pairwise comparisons, LLMs can provide fine-grained scoring, ranking, and selection beyond pure generation quality \cite{li2023prd,bai2024benchmarking}, and they are increasingly used in alignment, reasoning, and agentic decision making \cite{lee2023rlaif,liang2023encouraging,sun2024salmon,yang2023auto}. Considering the strong critic capability of LLMs, in this paper we leverage this property to guide action refinement.

\begin{figure*}[!ht]
    \centering
    \includegraphics[width=0.99\linewidth]{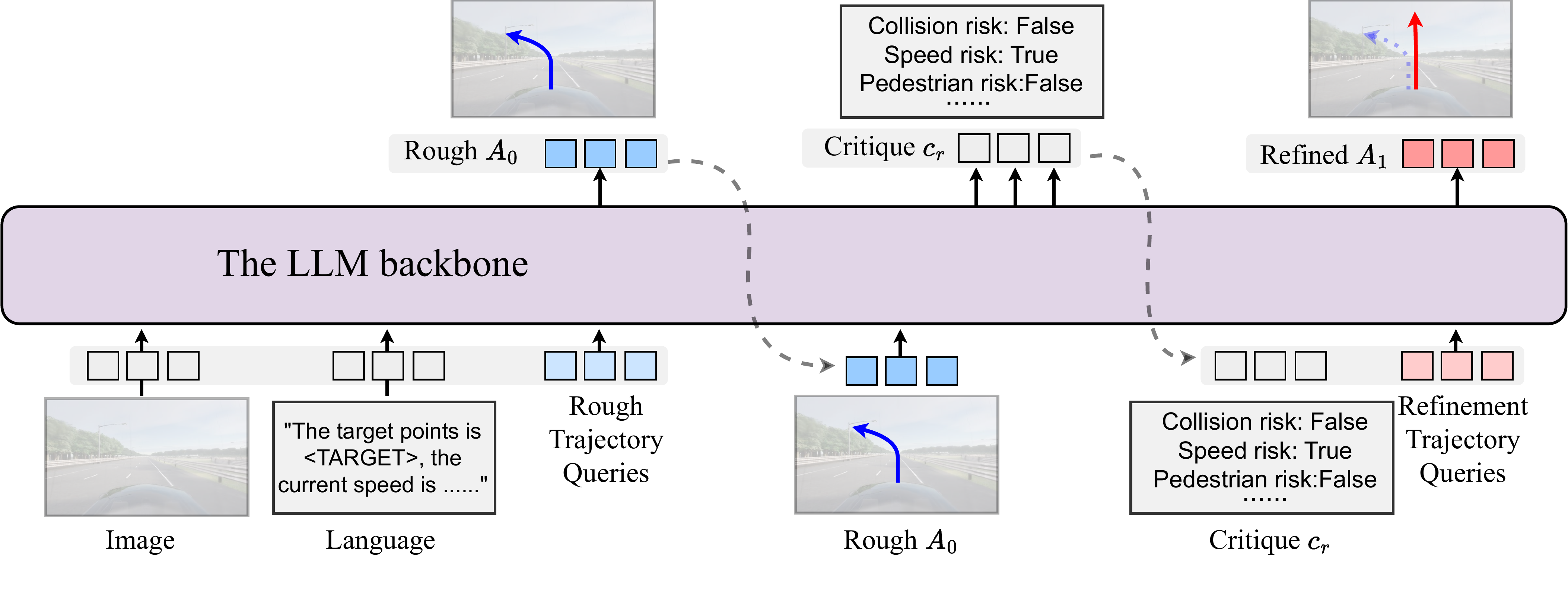}
    \caption{Overview of the proposed CriticVLA framework for autonomous driving. Stage-1: we encode the image, language instruction, and rough trajectory queries as input to an LLM to get a rough action without language output. Stage-2: the rough action is then additionally encoded and processed by LLM to generate language output as critics. Refined action is then generated from the critics and refinement trajectory queries. The LLM backbone is shared by Stage-1 and 2.}
    \label{fig:framework}
\end{figure*}

\section{Methodology}
\label{sec:method}

\subsection{Preliminary}
\label{sec:preli}

We first introduce the basic notations and mild assumptions used throughout this paper.

\paragraph{Notation.}
For VLA model, Let $\mathcal{V}$ denote the visual input space, representing all possible visual observations, and $\mathcal{L}$ the language input space, representing all possible language instructions. 
The action space $\mathcal{A}$ is defined as a metric space of trajectories equipped with a distance function $d: \mathcal{A} \times \mathcal{A} \to \mathbb{R}$ that measures the difference between actions.

We define a value function $Q: \mathcal{V} \times \mathcal{L} \times \mathcal{A} \to \mathbb{R}$, where $Q(V,L,A)$ indicates the quality of action $A$ under visual input $V$ and language input $L$. 
In autonomous driving scenarios, $Q$ typically corresponds to driving scores or success rates, and $d(\cdot,\cdot)$ can be instantiated as an Euclidean norm over trajectories, while $A$ represents trajectory signals such as speed and steering.
For simplicity, we write $Q(A) = Q(V,L,A)$ when $(V,L)$ are fixed. 
The optimal value is defined as
\begin{equation}
Q^* \equiv Q^*(V,L) = \sup_{A \in \mathcal{A}} Q(V,L,A).
\label{eq:qstar}
\end{equation}
% and the value gap between an action and the optimal action is given by
% $\Delta(V,L,A) = Q^*(V,L) - Q(V,L,A)$
% or simply $\Delta(A)$ for fixed $(V,L)$.
A VLA model is represented by a function $f: \mathcal{V} \times \mathcal{L} \to \mathcal{A}$ that generates an initial action $A_0 = f(V,L)$. 
The critic model is defined as $c: \mathcal{V} \times \mathcal{L} \times \mathcal{A} \to \mathcal{A}$, which takes $(V,L,A)$ as input. 
For fixed $(V,L)$, we define $C(A) = c(V,L,A)$.
The training set for the critic is denoted by $S \subset \mathcal{V} \times \mathcal{L} \times \mathcal{A}$, and the corresponding set of training actions is
\begin{equation}
T = \{ A \in \mathcal{A} : \exists (V,L) \text{ such that } (V,L,A) \in S \}.
\end{equation}
For any action $a \in \mathcal{A}$, its distance to the training set $T$ can be defined as
\begin{equation}
\rho(a) = \min_{A \in T} d(a, A).
\label{eq:rho}
\end{equation}

\begin{assumption}\label{ass:1}
(Critic Improvement)
The critic is assumed to have a guaranteed improvement ratio $\beta \in (0,1)$ on the training set, i.e., for all $A \in $ training set $T$,
\begin{equation}
Q(C(A)) - Q(A) \ge \beta \, [Q^*(V,L) - Q(A)].
\end{equation}
This means the critic can improve the value of any action by at least a fraction $\beta$ of its gap to the optimal value. A larger value of $\beta$ indicates a better ability of the critic.
\end{assumption}

\begin{assumption}\label{ass:2}
(Lipschitz Continuity)
There exists a constant $L_Q > 0$ such that for all $A^{(1)}, A^{(2)} \in \mathcal{A}$,
\begin{equation}
|Q(A^{(1)}) - Q(A^{(2)})| \le L_Q \, d(A^{(1)}, A^{(2)}),
\end{equation}
ensuring that small changes in actions lead to bounded changes in value. For critic function, there exists a constant $L_C > 0$ such that for all $A^{(1)}, A^{(2)} \in \mathcal{A}$,
\begin{equation}
d(C(A^{(1)}), C(A^{(2)})) \le L_C \, d(A^{(1)}, A^{(2)}),
\end{equation}
which indicates that the critic’s refinement of actions is smooth and stable.
\end{assumption}

In practice, Assumptions \ref{ass:1} and \ref{ass:2} are mild. First, Assumption \ref{ass:1} is stated only on the critic training action set $T$, and thus can be reasonably approached as long as the critic is sufficiently trained on $T$. The parameter $\beta$ reflects the critic’s improvement capability: any $\beta>0$ implies a guaranteed (possibly small) positive refinement effect relative to the initial action, and our method does not rely on near-perfect critics (i.e., $\beta\to 1$). 

Empirically, for our method, we observe an average $\beta$ estimate of roughly $\beta\approx 0.1$, suggesting that even a moderate critic can yield measurable gains.
Moreover, in ADM scenarios, the Assumption \ref{ass:2} is readily satisfied in practice, and see Appendix for more empirical evidences of these assumptions.

% \subsection{Theoretical Analysis}

\subsection{Our CriticVLA}
\label{sec:criticvla}

In this section, using the definitions and assumptions from Section \ref{sec:preli}, we derive Theorem \ref{theo:1}, which is a performance analysis of action refinement. Inspired by the Theorem \ref{theo:1}, we propose our critic-centric VLA framework for autonomous driving with two stages, called CriticVLA, as shown in Fig.~\ref{fig:conc} (b) and Fig.~\ref{fig:framework}. 

\begin{theorem}\label{theo:1}
For fixed visual input $V$ and language input $L$, let $A_0$ denote the initial action. Suppose the critic model satisfies Assumption \ref{ass:1} (critic improvement on the training set) and Assumption \ref{ass:2} (Lipschitz continuity). Let $\rho_0 = \rho(A_0)$ denote the distance from $A_0$ to the training set $T$ like Eq.~(\ref{eq:rho}). Then, the value of the improved action $A_1 = C(A_0)$ is lower bounded by
\begin{equation}
    Q(A_1) \ge \beta Q^* +  \underbrace{\colorbox[rgb]{0.854,0.910,0.988}{$(1 - \beta) \, Q(A_0)$}}_{\text{For Stage-1 in Sec.~\ref{sec:stage1}}} - \underbrace{\colorbox[rgb]{1,0.8,0.8}{$\rho_0 L_Q (1 - \beta + L_C)$}}_{\text{For Stage-2 in Sec.~\ref{sec:stage2}}},
    \label{eq:Qa1}
\end{equation}
where $Q^*$ is the upper bound value for given $V,L$ and the corresponding optimal action as defined in Eq.~(\ref{eq:qstar}).
\end{theorem}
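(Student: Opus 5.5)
The plan is to transfer the critic-improvement guarantee of Assumption~\ref{ass:1} from the training set $T$ to the arbitrary initial action $A_0$, using a nearest training action as a proxy and paying a Lipschitz penalty. First, let $\tilde A \in T$ attain the minimum in Eq.~(\ref{eq:rho}), so $d(A_0,\tilde A)=\rho_0$. If the minimum is not attained, take $\tilde A$ with $d(A_0,\tilde A)\le \rho_0+\varepsilon$ and let $\varepsilon\to 0$ at the end. Here $(V,L)$ is fixed and Assumption~\ref{ass:1} is applied to $\tilde A$ under that $(V,L)$. This is the same implicit reading the paper uses in defining $C$ for fixed $(V,L)$.

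Decompose $Q(A_1)=Q(C(A_0))$ as
\begin{equation*}
Q(C(A_0)) = Q(C(\tilde A)) + \bigl[Q(C(A_0)) - Q(C(\tilde A))\bigr].
\end{equation*}
For the bracket, apply the $Q$-Lipschitz bound and then the $C$-Lipschitz bound from Assumption~\ref{ass:2}. This gives $|Q(C(A_0))-Q(C(\tilde A))|\le L_Q\, d(C(A_0),C(\tilde A))\le L_Q L_C \rho_0$, so the bracket is at least $-L_Q L_C\rho_0$.

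For the first term, Assumption~\ref{ass:1} at $\tilde A\in T$ gives $Q(C(\tilde A))\ge \beta Q^* + (1-\beta)Q(\tilde A)$. Then replace $Q(\tilde A)$ by $Q(A_0)$ using the $Q$-Lipschitz bound again: $Q(\tilde A)\ge Q(A_0)-L_Q\rho_0$. Because $1-\beta>0$ (as $\beta\in(0,1)$), multiplying preserves the inequality, so $(1-\beta)Q(\tilde A)\ge (1-\beta)Q(A_0)-(1-\beta)L_Q\rho_0$.

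Summing the two pieces gives
\begin{equation*}
Q(A_1)\ge \beta Q^* + (1-\beta)Q(A_0) - \rho_0 L_Q(1-\beta) - \rho_0 L_Q L_C,
\end{equation*}
which is exactly Eq.~(\ref{eq:Qa1}). Each step is a single inequality, so the argument itself is routine.

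The main point needing care is the choice of proxy. Its neighbourhood must be measured in the input space of $C$, which is what makes the $L_C$ factor appear. The sign of $1-\beta$ must also be tracked so that the substitution $Q(\tilde A)\to Q(A_0)$ goes in the right direction. A secondary subtlety is existence of the minimizer in Eq.~(\ref{eq:rho}); it is handled by the $\varepsilon$-approximation and passing to the limit, since every other quantity is continuous in $\varepsilon$.
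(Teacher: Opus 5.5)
Your proposal is correct and follows essentially the same route as the paper's proof. Both pick a nearest training action $\hat A\in T$ and bound $Q(C(A_0))\ge Q(C(\hat A))-L_QL_C\rho_0$ using the two Lipschitz conditions of Assumption~\ref{ass:2}. Both then apply Assumption~\ref{ass:1} at $\hat A$ and replace $Q(\hat A)$ by $Q(A_0)-L_Q\rho_0$, which is valid because $1-\beta>0$. The only difference is minor: the paper gets an exact minimizer from the finiteness of $T$, while your $\varepsilon$-approximation also covers the case where the minimum in Eq.~(\ref{eq:rho}) is only an infimum.
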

\begin{proof}
	(See Appendix for details).\qedhere
\end{proof}
The empirical verification of Theorem \ref{theo:1} will be shown in Section \ref{sec:ana}. Actually, the Theorem \ref{theo:1} reveals an important insight: to obtain a satisfying $A_1$ with large $Q(A_1)$, the following two conditions are indispensable.

\noindent\textbf{(1) The critic model must generalize well.} Since $(1 - \beta) \in (0,1)$ and $Q(A_0) > 0$, obtaining a large $Q(A_1)$ requires a small $\rho_0$, i.e., $A_0$ should not lie far from the training action set $T$. Equivalently, $T$ must cover a broad range of rough trajectories.

\noindent\textbf{(2) The rough trajectory $A_0$ should not be too bad.} Eq.~(\ref{eq:Qa1}) shows that $Q(A_0)$ directly contributes to the lower-bound estimate of $Q(A_1)$. Therefore, if $A_0$ is severely suboptimal, even a well-generalized critic with small $\rho_0$ may still be unable to yield a high $Q(A_1)$.

Therefore, inspired by these two conditions, our framework is designed with two corresponding stages. On one hand, we can leverage various existing advanced technologies, including models and data, to obtain an good initial ADM for Stage-1, as detailed in Sections~\ref{sec:stage1}; on the other hand, we can reuse the Stage-1 prediction from the initial ADM and further synthesize large scale virtual driving data to enhance refinement ability of the critic model, which is elaborated in Sections~\ref{sec:stage2}.

\subsubsection{Stage-1: Rough Trajectory Generation }
\label{sec:stage1}

The first stage aims to generate a rough trajectory $A_0$ that provides a rough yet reasonable initialization for subsequent refinement. We build an initial VLA $f_{\text{gen}}$ by equipping an InternVL2-1B~\cite{chen2024internvl} backbone with LoRA~\cite{hu2022lora} and DETR~\cite{carion2020end} modules. Following established post-training pipelines~\cite{renz2025simlingo,liu2023visual,liu2024improved}, we fine-tune this VLA on a multi-scenario base dataset $D$ with over 2.02 million frames collected from CARLA~\cite{dosovitskiy2017carla}, DriveLM~\cite{sima2024drivelm}, and additional high-quality safe-driving data~\cite{renz2025simlingo}. For efficiency and simplicity, the model produces no textual output in this stage. Instead, the learnable DETR~\cite{carion2020end} queries are directly mapped to driving actions, yielding
\begin{equation}
    A_0 = f_{\text{gen}}(I;   \text{LoRA}_{\setlength{\fboxsep}{0pt}\fbox{\textcolor{myblue}{\rule{0.2cm}{0.2cm}}}}, \text{DETR}_{\setlength{\fboxsep}{0pt}\fbox{\textcolor{myblue}{\rule{0.2cm}{0.2cm}}}}),
    \label{eq:roughgen}
\end{equation}
where $I = (V,L)$ denote the visual and language inputs, and  $\setlength{\fboxsep}{0pt}\fbox{\textcolor{myblue}{\rule{0.25cm}{0.25cm}}} = (\text{LoRA}_{\setlength{\fboxsep}{0pt}\fbox{\textcolor{myblue}{\rule{0.2cm}{0.2cm}}}}, \text{DETR}_{\setlength{\fboxsep}{0pt}\fbox{\textcolor{myblue}{\rule{0.2cm}{0.2cm}}}})$  denote the learnable generator parameters from LoRA and DETR, respectively, as shown in Fig.~\ref{fig:conc}(b).   
The implementation details of Stage-1, including dataset organization, instruction design, and training configurations, are provided in the Appendix.
\begin{table*}[]
\centering
\caption{\textbf{Results on Bench2Drive Closed-loop Benckmark.} C/L refers to camera/LiDAR, * denote using expert feature distillation. DS represents Driving Score, SR is short for Success Rate. We conduct three independent trials with different seeds and report the mean and variance of the results.}
\label{tab:sota}
\scalebox{0.99}{
% \begin{tabular}{>{\kern-\tabcolsep}*{8}{>$c<$}<{\kern-\tabcolsep}}
\resizebox{\textwidth}{!}{
\begin{tabular}{@{}lcccccc@{}}
\toprule
\multirow{2}{*}{\textbf{Method}} & \multirow{2}{*}{\textbf{Modality}} & \multirow{2}{*}{\textbf{Venue}} & \multicolumn{4}{c}{\textbf{Closed-loop Metric}}  \cr
\cmidrule(lr){4-7}

& & & DS $\uparrow$ & SR(\%) $\uparrow$ & Efficiency $\uparrow$ & Comfortness $\uparrow$  \cr 
        \midrule
TCP*~\cite{TCP} & C & {NeurIPS' 22} & 40.70 & 15.00 & 54.26 & 47.80  \cr
TCP-traj*~ & C & {NeurIPS' 22} & 59.90 & 30.00 & 76.54 & 18.08  \cr
UniAD-Base~\cite{hu2023uniad} & C & {CVPR' 23} & 45.81 & 16.36 & 129.21 & 43.58  \cr
ThinkTwice*~\cite{jia2023think} & C & {CVPR' 23} & 62.44 & 31.23 & 69.33 & 16.22  \cr
VAD~\cite{jiang2023vad} & C & {ICCV' 23} & 42.35 & 15.00 & 157.94 & 46.01  \cr
DriveAdaptor*~\cite{jia2023driveadapter} & C+L & {ICCV' 23} & 64.22 & 33.08 & 70.22 & 16.01  \cr
GenAD~\cite{zheng2024genad} & C & {ECCV' 24} & 44.81 & 15.90 & - & -  \cr
DriveTrans~\cite{jia2025drivetransformer} & C & {ICLR' 25} & 63.46 & 35.01 & 100.64 & 20.78  \cr
TransFuser++~\cite{zimmerlin2024hidden} & C+L & {arXiv' 24} & 84.21 & 67.27 & - & -  \cr
MomAD~\cite{MomAD} & C & {CVPR' 25} & 44.54 & 16.71 & 170.21 & \textbf{48.63}  \cr
ORION~\cite{fu2025orion} & C & {ICCV' 25} & 77.74 & 54.62 & 151.48 & 17.38  \cr
Simlingo~\cite{renz2025simlingo}  & C & {CVPR' 25} & 85.07 & 67.27 & 259.23 & 33.67  \cr
\midrule
\cellcolor[HTML]{F2F2F2} CriticVLA~(\textit{Ours}) & \cellcolor[HTML]{F2F2F2} C & \cellcolor[HTML]{F2F2F2} - & \cellcolor[HTML]{F2F2F2} \textbf{88.02\small{$\pm$0.17}} & \cellcolor[HTML]{F2F2F2} \textbf{73.33\small{$\pm$0.27}} & \cellcolor[HTML]{F2F2F2} \textbf{269.24\small{$\pm$2.59}} & \cellcolor[HTML]{F2F2F2} 36.45\small{$\pm$0.50}  \cr
\bottomrule
\end{tabular}
}
}
\end{table*}

\subsubsection{Stage-2: Critic and refinement by VLA }
\label{sec:stage2}
Based on the rough $A_0$ generated in Stage-1, we consider further refining it using a critique to produce a better action. As shown in Eq.~(\ref{eq:critic-refine}), we first require the model to generate a high-quality and interpretable critique $c_r$ for $A_0$. The template of $c_r$ is in Fig.~\ref{fig:template}. Using this $c_r$ as an refinement instruction, we further employ DETR~\cite{carion2020end} with learnable queries, similar to  Eq.~(\ref{eq:roughgen}), to generate the final refined action $A_1$.  

\begin{equation}
\left\{
\begin{aligned}
c_r &= f_{\text{gen}}\!\left(A_0,I;\, \text{LoRA}_{%
\setlength{\fboxsep}{0pt}\fbox{\textcolor{myblue}{\rule{0.2cm}{0.2cm}}}%
\setlength{\fboxsep}{0pt}\fbox{\textcolor{myred}{\rule{0.2cm}{0.2cm}}}}\right),\\[6pt]
A_1 &= f_{\text{gen}}\!\left(c_r, A_0, I;\, \text{LoRA}_{%
\setlength{\fboxsep}{0pt}\fbox{\textcolor{myblue}{\rule{0.2cm}{0.2cm}}}%
\setlength{\fboxsep}{0pt}\fbox{\textcolor{myred}{\rule{0.2cm}{0.2cm}}}},\,
\text{DETR}_{\setlength{\fboxsep}{0pt}\fbox{\textcolor{myred}{\rule{0.2cm}{0.2cm}}}}\right),
\end{aligned}
\right.
\label{eq:critic-refine}
\end{equation}
where the additional $\setlength{\fboxsep}{0pt}\fbox{\textcolor{myred}{\rule{0.25cm}{0.25cm}}}$  denote the learnable parameters in Stage-2, as shown in Fig.~\ref{fig:conc} (b). Specifically, based on the analysis in Theorem \ref{theo:1}, in Stage-2, we should enhance critic's refinement capability as much as possible to reduce $\rho_0$, thereby achieving better refinement of $A_0$. 

In practice, we reuse the VLA model fine-tuned in Stage-1 as the starting point for Stage-2 training. For critique $c_r$ generation, an additional set of LoRA parameters is introduced. These parameters are combined with the LoRA parameters $\text{LoRA}_{\setlength{\fboxsep}{0pt}\fbox{\textcolor{myblue}{\rule{0.2cm}{0.2cm}}} }$ in Eq.~(\ref{eq:roughgen}) to form $\text{LoRA}_{\setlength{\fboxsep}{0pt}\fbox{\textcolor{myblue}{\rule{0.2cm}{0.2cm}}}\setlength{\fboxsep}{0pt}\fbox{\textcolor{myred}{\rule{0.2cm}{0.2cm}}}}$. Meanwhile, for generation of $A_1$, additional learnable queries $\text{DETR}_{\setlength{\fboxsep}{0pt}\fbox{\textcolor{myred}{\rule{0.2cm}{0.2cm}}}}$ are introduced.

\subsection{CriticDrive Dataset}
\label{sec:criticdrive}

\begin{figure}%[t]
    \centering
    \includegraphics[width=0.99\linewidth]{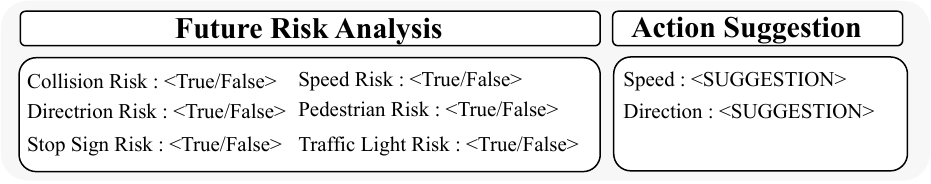}
    % \vspace{-0.3cm}
    \caption{The template of critique $c_r$. }
    \label{fig:template}
\end{figure}

Theorem~\ref{theo:1} states that critic-based refinement requires a diverse training set $T$ such that any rough trajectory from Stage-1 lies within a small distance $\rho_0$ from $T$ and the critic exhibits positive improvement ratio $\beta$ on actions in $T$. CriticDrive is explicitly constructed to meet these data requirements through two complementary subsets. 
We first construct a \textbf{Model-Generated Subset} (MGS) by collecting $2{,}023{,}499$ rough trajectories $A_0$ produced by the Stage-1 model, enabling the critic to achieve a stable improvement ratio $\beta$ over the Stage-1 model. In addition, an \textbf{Extra Perturbation-Augmented Subset} (EPAS) is constructed from $1{,}058{,}349$ ground-truth trajectories by appling diverse perturbations (e.g., longitudinal speed scaling, forced lateral offsets, explicit collision synthesis), yielding $10{,}895{,}598$ additional high-risk rough trajectories. This substantially enlarges the support of the action set $T$ and effectively reduces $\rho_0$ in practice. 

Each rough trajectory is evaluated against the expert route and scene context to automatically derive structured annotations $c_r$ using a four-step risk assessment pipeline based on deviation from ground-truth and safety consideration (details in Appendix): (1) lateral risk (angular deviations, route offset), (2) longitudinal risk (speed limit compliance, speed deviation, intent classification), (3) collision risk (3D oriented bounding boxes and separating-axis theorem tests against nearby agents and static obstacles), and (4) environmental compliance (complexity, traffic rules, right-of-way, pedestrians, adverse conditions). This pipeline outputs the True/False risk analysis as in Fig.~\ref{fig:template} and refined action that serve as the learning target for the Stage-2 critique. 

We consider two configurations in training: (1) \emph{Base CriticDrive}, which emphasizes MGS and mixes in 15\% unperturbed GT data to teach the critic when to output near-zero deltas, stabilizing $\beta$ on in-distribution samples; and (2) \emph{Full CriticDrive}, which samples (MGS+GT) and EPAS in 1:1 ratio to maximize exposure to diverse high-risk trajectories. Please refer to Table~\ref{tab:ablation_module} for the experimental results.

\section{Experiment}
\label{sec:exp}

\begin{figure*}[t]
    \centering
    \includegraphics[width=0.99\linewidth]{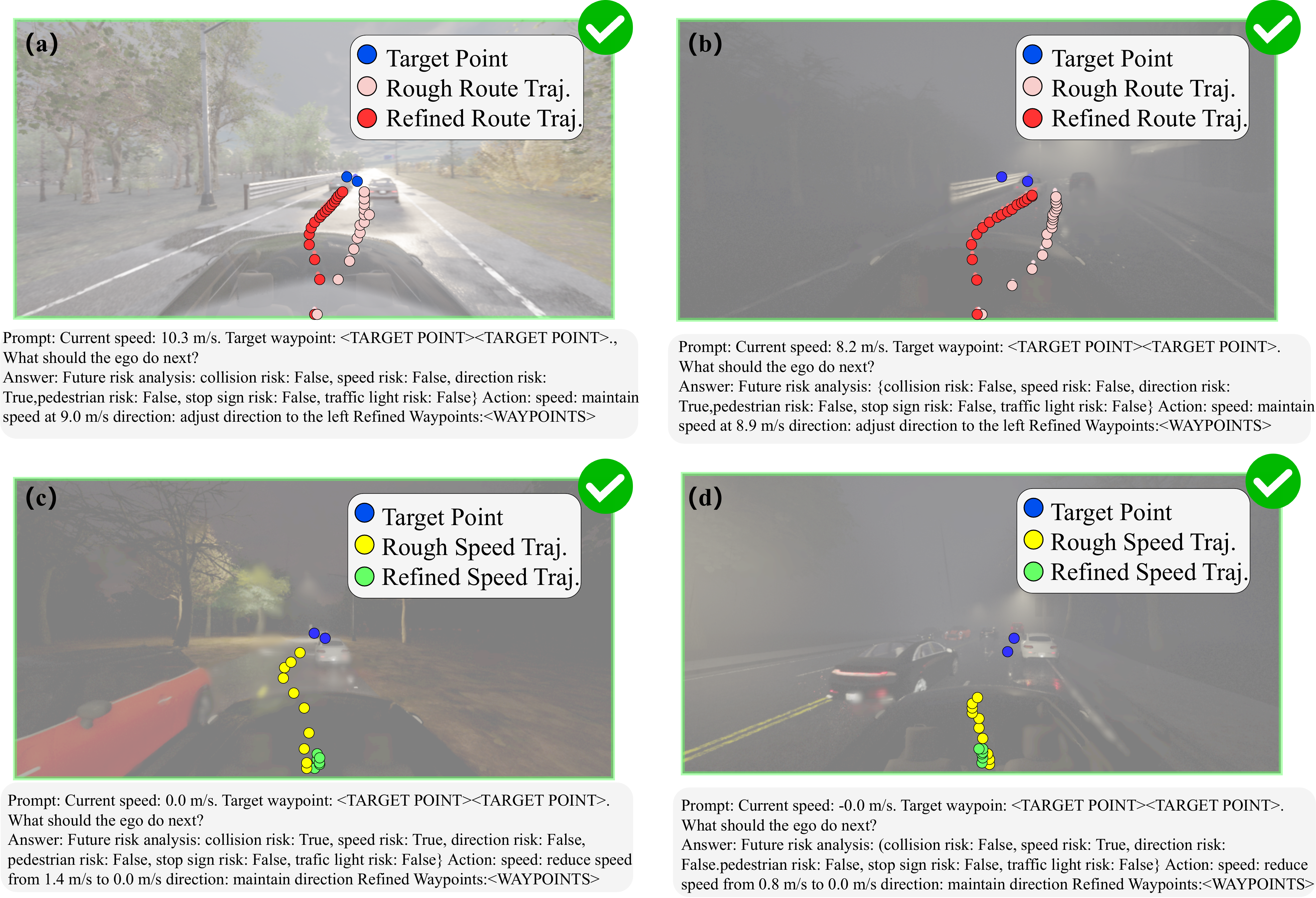}
    \caption{The Visualization of CriticVLA's refinement effect. The two blue points represent the current and next target points, respectively: (a) and (b) demonstrate the refinement of route waypoints leading to successful correction of steering to the highway entrance, (c) and (d) illustrate the adjustment of speed waypoints that control the future speed to avoid collision. The gray boxes display generated critiques. }
    \label{fig:vis}
\end{figure*}

\subsection{Experiment Settings}

In this section, we evaluate all methods on Bench2Drive~\cite{jia2024bench2drive}, a widely used challenging \emph{closed-loop} autonomous driving benchmark built on CARLA v0.9.15. Bench2Drive provides 220 test routes, each approximately 150 meters in length, covering diverse scenarios such as dense urban traffic, unprotected turns, highway merging, etc. Unlike open-loop benchmarks, the closed-loop setup requires the agent to continuously interact with other traffic participants and to update its decisions based on the evolving scene state. This makes it particularly suitable for assessing whether a critic can reliably detect potential risks and refine trajectories in highly interactive situations. 
Following the official protocol, we report four standard metrics: \emph{Driving Score} (DS) and \emph{Success Rate} (SR) as primary indicators of task completion and safety, and \emph{Efficiency} (speed performance) and \emph{Comfortness} (motion smoothness) as secondary indicators of driving quality. 

\subsection{Main Results} 
Table~\ref{tab:sota} compares CriticVLA with previous methods on Bench2Drive. CriticVLA attains a new state-of-the-art performance, reaching a Driving Score of 88.02 and a Success Rate of 73.33\%. 
Crucially, these gains in safety and task completion are achieved without sacrificing driving quality. Beyond primary metrics, CriticVLA also exhibits strong Efficiency and competitive Comfortness, indicating that the proposed ``judge, then drive'' paradigm effectively mitigates the typical safety--efficiency trade-off in end-to-end models. This balanced behavior is enabled by our two-stage architecture, where decoupled refinement for speed and direction allows the critic to provide precise, actionable guidance rather than vague commands (e.g., ``be careful''). As a result, the model can execute complex maneuvers while maintaining safe interactions and reasonable velocity.

Fig.~\ref{fig:vis} provides qualitative evidence of the refinement ability. The visualizations show that our critic model successfully detects and rectifies significant errors in the initial rough trajectories. 
These results confirm that our method can effectively reason about and correct both lateral risk and collision risk in closed-loop driving.

\renewcommand{\arraystretch}{1.1}
\begin{table}[t]
\centering
\caption{\textbf{ Ablation of critic model} 
on Bench2Drive.}
\label{tab:ablation_module}
\vspace{-6pt}
\scalebox{0.87}{
\begin{tabular}{l|cc}
\hline
Model Variant & DS $\uparrow$ & SR (\%) $\uparrow$  \\
\hline
Stage-1 only & 87.48 (\small{$\pm$0.72}) & 70.6 (\small{$\pm$1.45}) \\
Implicit critic & 87.49 (\small{$\pm$0.54}) & 71.06 (\small{$\pm$0.53}) \\ 
CriticVLA (\textit{Base CriticDrive}) & 87.97 (\small{$\pm$0.29}) & 72.27 (\small{$\pm$0.46})  \\ 
CriticVLA (\textit{Full CriticDrive}) & \textbf{88.02 (\small{$\pm$0.17})}
& \textbf{73.33 (\small$\pm$0.27)}  \\
\hline
\end{tabular}
}   
\end{table}

\begin{table*}[]
\centering
\caption{\textbf{Multi-Ability Results on Bench2Drive Closed-loop Benchmark.} C/L refers to camera/LiDAR, * denote using expert feature distillation. We conduct three independent trials with different seeds and report the mean and variance of the results.}
\label{tab:multi_ability}
\scalebox{0.75}{
% \begin{tabular}{>{\kern-\tabcolsep}*{8}{>$c<$}<{\kern-\tabcolsep}}
\begin{tabular}{@{}lcccccccc@{}}
\toprule
\multirow{2}{*}{\textbf{Method}} & \multirow{2}{*}{\textbf{Modality}} & \multirow{2}{*}{\textbf{Venue}} & \multicolumn{6}{c}{\textbf{Ability (\%) $\uparrow$}}  \cr
\cmidrule(lr){4-9}

& & & Merging & Overtaking & Emergency Brake & Give Way & Traffic Sign & Mean  \cr 
\midrule
TCP*~\cite{TCP} & C & {NeurIPS' 22} & 16.18 & 20.00 & 20.00 & 10.00 & 6.99 & 14.63  \cr
TCP-traj*~ & C & {NeurIPS' 22} & 8.89 & 24.29 & 51.67 & 40.00 & 46.28 & 34.22  \cr
UniAD-Base~\cite{hu2023uniad} & C & {CVPR' 23} & 14.10 & 17.78 & 21.67 & 10.00 & 14.21 & 15.55  \cr
ThinkTwice*~\cite{jia2023think} & C & {CVPR' 23} & 27.38 & 18.42 & 35.82 & 50.00 & 54.23 & 37.17  \cr
VAD~\cite{jiang2023vad} & C & {ICCV' 23} & 8.11 & 24.44 & 18.64 & 20.00 & 19.15 & 18.07  \cr
DriveAdaptor*~\cite{jia2023driveadapter} & C+L & {ICCV' 23} & 28.82 & 26.38 & 48.76 & 50.00 & 56.43 & 42.08  \cr
DriveTrans~\cite{jia2025drivetransformer} & C & {ICLR' 25} & 17.57 & 35.00 & 48.36 & 40.00 & 52.10 & 38.60  \cr
TransFuser++~\cite{zimmerlin2024hidden} & C+L & {arXiv' 24} & 58.75 & 57.77 & \textbf{83.33} & 40.00 & 82.11 & 64.39  \cr
ORION~\cite{fu2025orion} & C & {ICCV' 25} & 25.00 & 71.11 & 78.33 & 30.00 & 69.15 & 54.72  \cr
Simlingo~\cite{renz2025simlingo}  & C & {CVPR' 25} & 54.01 & 57.04 & \textbf{88.33} & \textbf{53.33} & \textbf{82.45} & 67.03  \cr
\midrule
\cellcolor[HTML]{F2F2F2} CriticVLA~(\textit{Ours}) & \cellcolor[HTML]{F2F2F2} C & \cellcolor[HTML]{F2F2F2} - & \cellcolor[HTML]{F2F2F2} \textbf{61.28\small{$\pm$1.30}} & \cellcolor[HTML]{F2F2F2} \textbf{76.30\small{$\pm$1.28}} & \cellcolor[HTML]{F2F2F2} \textbf{88.33\small{$\pm$0.00}} & \cellcolor[HTML]{F2F2F2} 50.00\small{$\pm$0.00} & \cellcolor[HTML]{F2F2F2} 81.06\small{$\pm$0.91} & \cellcolor[HTML]{F2F2F2} \textbf{71.39\small{$\pm$0.26}} \cr
\bottomrule
\end{tabular}
}
\end{table*}

\section{Analysis}
\label{sec:ana}
In this section, we conduct a deeper analysis of the proposed CriticVLA, with a particular focus on its ability to refine actions in challenging scenarios. 

\begin{mdframed}[backgroundcolor=gray!8]
\begin{minipage}{\linewidth}
(1) How to design a more effective critic model?
\end{minipage}
\end{mdframed}

Table~\ref{tab:ablation_module} presents an ablation of our critic design on Bench2Drive, comparing four variants that progressively introduce (1) an explicit language critic and (2) Extra Perturbation-Augmented Subset from CriticDrive.
The first row is the Stage-1 only baseline, whereas Implicit critic~(Row 2) reuses the Stage-1 architecture to take its own coarse trajectory as input and directly output a refined trajectory, without generating any language critique. This configuration can be viewed as an implicit critic without language. 

\textbf{Importance of language critic:} Row~2 and Row~3 share the same training data and hyperparameter, and the only difference between these two variants is the present of explicit language critic. The language critic can articulate collision, speed, and direction risks and provide targeted corrections, rather than relying on implicit feature-level adjustments. Their relative SR improvement to Stage-1 (+0.46 vs. +1.67) supports our design choice that an explicit, language-based critic is more effective than an implicit critic without language.
This also indicates that making the refinement signal linguistically explicit helps the model focus on safety-critical failure modes and deliver more targeted trajectory corrections under the same training setup.

\textbf{Necessity of using \textit{Full CriticDrive}:} combining the Extra Perturbation-Augmented Subset with the Model-Generated Subset (Row 4) yields the best overall performance.
EPAS provides more diverse critiques, including aggressive speed profiles, lane intrusions, and synthetic collision trajectories, which broadens the critic’s understanding of risk and enhance its ability to refine actions. 
Consequently, CriticVLA trained on \textit{Full CriticDrive} attains higher SR with remarkably low variance across runs, indicating stronger robustness in closed-loop interactions with other traffic participants. This empirically validates the insights from Theorem \ref{theo:1} regarding the necessity of expanding the action set $T$ to achieve a small $\rho_0$ while maintaining a positive improvement ratio $\beta$.

\begin{mdframed}[backgroundcolor=gray!8]
\begin{minipage}{\linewidth}
(2) What difficult scenarios does the proposed CriticVLA specifically address?
\end{minipage}
\end{mdframed}
Table~\ref{tab:multi_ability} evaluates five advanced driving capabilities and compares CriticVLA with prior methods. CriticVLA achieves particularly large gains on \textbf{Merging} and \textbf{Overtaking}, improving success rates by 7.27\% and 19.26\% over SimLingo, respectively. Both abilities pertain to highly interactive scenarios 
where failures often stem from rear-end collision or unsafe lateral maneuvers. Under these scenarios, CriticVLA’s dynamic risk analysis and subsequent action refinement play a crucial role in coping with interaction without becoming excessively aggressive or conservative, thereby resulting in a safe and efficient driving policy.

To get a better understanding of what scenarios CriticVLA specifically address, we further perform a scenario‑wise analysis by grouping Bench2Drive routes according to their scenario type and report per‑scenario success rates in Fig.\ref{fig:perdiff}. 
CriticVLA achieves the largest relative gain in scenarios that demand strict safety compliance and precise longitudinal control. In \textbf{InterurbanActorFlow (SR 20\%$\rightarrow$100\%)}, longitudinal risk is identified and appropriate speed adjustments are suggested to help the agent match the surrounding traffic flow instead of either stalling or merging aggressively. For \textbf{NonSignalizedJunctionLeft/RightTurn (SR 60\%$\rightarrow$100\%)}, lateral risk analysis detects lateral deviations that would lead to cutting corners or drifting into opposite lanes and refines the trajectory back to the lane. 
In most of these scenarios, \emph{Full CriticDrive} further improves success rates over the \emph{Base CriticDrive} variant, indicating that exposing the critic to a richer set of high-risk perturbed trajectories translates into better generalization on challenging cases.

\begin{figure}%[t]
    \centering
    \includegraphics[width=1\linewidth]{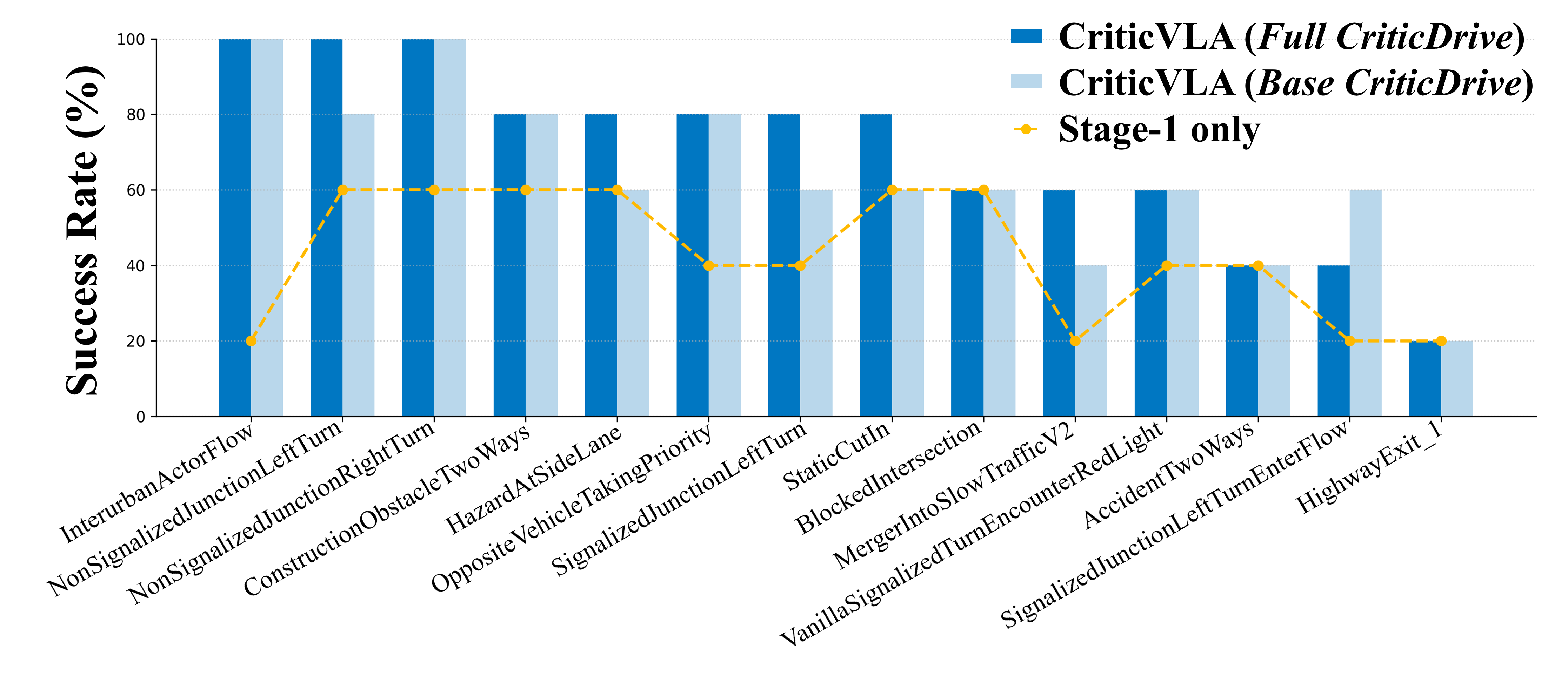}
    \vspace{-0.5cm}
    \caption{The performance of CriticVLA in difficult scenarios. }
    \label{fig:perdiff}
\end{figure}

\begin{mdframed}[backgroundcolor=gray!8]
\begin{minipage}{\linewidth}
(3) Can the proposed CriticVLA be applied iteratively to further optimize driving decisions?
\end{minipage}
\end{mdframed}
To enable fine-grained diagnosis, we construct a compact core-set $D_{\text{core}}$ from the full Bench2Drive routes by retaining the most challenging route from each scenario (See Appendix for details), yielding 44 routes in total. This core-set preserves the diversity while serving as a rigorous proxy for evaluating driving policies. Consequently, all subsequent analysis is conducted on it.

Since Sections~\ref{sec:method} and~\ref{sec:exp} have established the effectiveness of CriticVLA for one-step refinement of action, we now analyze whether it can be applied iteratively.
\begin{theorem}\label{theo:2}
 For the initial action $A_0$ under given input $V$ and $L$ in Theorem \ref{theo:1}, Let $A_k = C(A_{k-1}), k \geq 1$, and $\rho_k = \rho(A_k)$ denote the distance from $A_k$ to the training set $T$ like Eq.~(\ref{eq:rho}). A sufficient condition for the sequence $\{Q(A_k)\}$ to be monotonically increasing is that  
\begin{equation}
    \rho_k \leq c_\beta\cdot [Q^* - Q(A_k)],
    \label{eq:Qa2}
\end{equation}
where $Q^*$ is the upper bound value for given $V,L$ and the corresponding optimal action as defined in Eq.~(\ref{eq:qstar}). $c_\beta>0$ is the constant related to $\beta$.
\end{theorem}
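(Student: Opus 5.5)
We obtain the result by applying Theorem~\ref{theo:1} once per step. The argument of Theorem~\ref{theo:1} is valid for any starting action, so we may replace $A_0$ by $A_k$ and $A_1$ by $A_{k+1}=C(A_k)$. Since $(V,L)$ is fixed, $Q^*$ is the same at every step, and we get
\begin{equation*}
Q(A_{k+1}) \ge \beta Q^* + (1-\beta)Q(A_k) - \rho_k L_Q(1-\beta+L_C).
\end{equation*}
Subtracting $Q(A_k)$ from both sides rewrites this as a statement about the one-step improvement:
\begin{equation*}
Q(A_{k+1}) - Q(A_k) \ge \beta\,[Q^* - Q(A_k)] - \rho_k L_Q(1-\beta+L_C).
\end{equation*}

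We then pick $c_\beta$ so that the hypothesis makes the right-hand side nonnegative. Set
\[
c_\beta = \frac{\beta}{L_Q(1-\beta+L_C)}.
\]
This constant is strictly positive, because $\beta\in(0,1)$, $L_Q>0$ and $L_C>0$. It depends on $\beta$, and also on the Lipschitz constants, which we regard as fixed by Assumption~\ref{ass:2}.

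Now suppose $\rho_k \le c_\beta [Q^*-Q(A_k)]$. Then
\[
\rho_k L_Q(1-\beta+L_C) \le \beta [Q^*-Q(A_k)],
\]
so $Q(A_{k+1}) \ge Q(A_k)$. If the condition holds for every $k\ge 0$, the sequence $\{Q(A_k)\}$ is monotonically nondecreasing.

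If strict increase is wanted, we use a strict inequality in the hypothesis together with $Q(A_k)<Q^*$. If instead $Q(A_k)=Q^*$ for some $k$, the hypothesis forces $\rho_k=0$. Then the bound gives $Q(A_{k+1})\ge Q^*$, and since $Q^*$ is the supremum, $Q(A_{k+1})=Q^*$, so the sequence stays at its maximum.

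The main obstacle is justifying the reuse of Theorem~\ref{theo:1} at an arbitrary iterate. I expect its proof to go as follows. Choose $\tilde A\in T$ with $d(A_k,\tilde A)=\rho_k$, which requires the minimum in Eq.~(\ref{eq:rho}) to be attained; otherwise we take a near-minimizer and pass to the limit. Apply Assumption~\ref{ass:1} at $\tilde A$. Then transfer the bound back to $A_k$ using Lipschitz continuity twice: once for $Q$, with $|Q(A_k)-Q(\tilde A)|\le L_Q\rho_k$, and once for $Q\circ C$, with $|Q(C(A_k))-Q(C(\tilde A))|\le L_QL_C\rho_k$. The only thing to check is that none of these steps uses a property specific to $A_0$ rather than to a general action. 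Since $T$ and $Q^*$ are fixed for given $(V,L)$, the per-step bound holds uniformly in $k$.
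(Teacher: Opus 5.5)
Your proposal is correct and follows the paper's proof essentially verbatim: apply Theorem~\ref{theo:1} at the iterate $A_k$, require the resulting lower bound to be at least $Q(A_k)$, and solve for $\rho_k$ with $c_\beta = \beta/[L_Q(1-\beta+L_C)]$. Your additional remarks are sound but go beyond what the paper proves: the paper gets an attained minimum from the finiteness of $T$ rather than using near-minimizers, and it does not treat the strict or saturated ($Q(A_k)=Q^*$) cases.
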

\begin{proof}
	(See Appendix for details).\qedhere
\end{proof}
If the proposed CriticVLA can iteratively refine the rough driving $A_0$, this implies that
\begin{equation}
    Q(A_0) \leq Q(A_1)  \leq \dots \leq Q(A_\infty) \to Q^*, 
    \label{eq:qq}
\end{equation}

i.e., $\{Q(A_k)\}$ is monotonically increasing and converges to the upper bound $Q^*$. Theorem~\ref{theo:2} shows that this requires Eq.~(\ref{eq:Qa2}) to hold, which in turn implies
\begin{equation}
    \min_{A \in T} d(A_k, A) \leq c_\beta \cdot [Q^* - Q(A_k)] \leq c_\beta\epsilon.
\end{equation}
for any $\epsilon > 0$ and all sufficiently large $k$. Hence, repeated refinement would force $\min_{A \in T} d(A_k, A) \to 0$, meaning the training set $T$ must effectively cover all refined actions $A_k$ and even the optimal action $A^*$. In practice, constructing such a training set for autonomous driving is infeasible, even with large synthetic datasets like CriticDrive, as it would require $T$ to nearly cover the entire test distribution.

The empirical results in Table~\ref{tab:ablation_multi_turn} show that the first refinement step produces a substantial performance gain, while the second step offers only a marginal additional improvement with diminishing returns, consistent with the theoretical analysis above. Balancing effectiveness and computational cost, we recommend one-step refinement with CriticVLA as the preferred operating mode.

\begin{table}[h]
\centering
\caption{\textbf{Multi-step refinements} 
on core-set.}
\vspace{-6pt}
\scalebox{0.99}{
\begin{tabular}{l|cc}
\hline
Multi-step & DS $\uparrow$ & SR(\%) $\uparrow$  \\
\hline
0 & 82.58 &  61.36 \\
1 (ours) & 88.24 & 72.73 \\
2 & \textbf{88.75} & \textbf{75.00} \\
3 & 88.13 & 70.45 \\
\hline
\end{tabular}
}
\label{tab:ablation_multi_turn}
\end{table}
\vspace{-0.1cm}

\begin{mdframed}[backgroundcolor=gray!8]
\begin{minipage}{\linewidth}
(4) Is the main Theorem empirically valid in autonomous driving scenarios?
\end{minipage}
\end{mdframed}
In Theorem \ref{theo:1}, we revealed that both better quality of the rough $A_0$ and stronger refinement ability of the critic model can enhance the performance of our proposed CriticVLA. In this section, we will further conduct numerical verification of these conclusions in the autonomous driving scenario based on core-set $D_{\text{core}}$. Specifically,

\textbf{(a) Quality of $A_0$}: We consider $\tilde{A_0} = A_0 + \delta a$ to simulate the quality of rough decisions, where $\delta a \sim \mathcal{N}(\mathbf{0},\sigma \mathbf{I})$ and $\sigma \in \mathbb{R}^+$ represents the level of perturbation applied to rough trajectories. This additive Gaussian perturbation provides a controlled way to mimic errors from Stage-1 trajectory generation (e.g., imperfect perception or planning uncertainty). The larger the value of $\sigma$, the worse the quality of $\tilde{A_0}$, and the more difficult it becomes for the critic to recover a safe and feasible trajectory within one refinement step. The results in Table~\ref{tab:ablation_rough} illustrate that the quality of $A_0$ is crucial for CriticVLA and directly impacts the final closed-loop driving performance.

\textbf{(b) Refinement ability of the critic model}: During training, each epoch samples a fixed-size minibatch from the \textit{Full CriticDrive} dataset, so CriticVLA is exposed to more training samples by increasing the number of epochs, effectively improving the refinement ability. The results in Table \ref{tab:ablation_generalization} confirm that the refinement ability of the critic model is crucial in the framework we proposed.

\begin{table}[h]
\centering
\caption{\textbf{ Ablation on Quality of $A_0$} 
on core-set.}
\vspace{-6pt}
\scalebox{0.99}{
\begin{tabular}{c|cc}
\hline
Quality of $A_0$ & DS $\uparrow$ & SR(\%) $\uparrow$  \\
\hline
 $\sigma=1.0$ & 85.18 & 62.79  \\
$\sigma=0.5$ & 86.57 & 70.45 \\
without noise & \textbf{88.24} & \textbf{72.73} \\
\hline
\end{tabular}
}
\label{tab:ablation_rough}
\end{table}

\begin{table}[h]
\centering
\caption{\textbf{ Ablation of refinement ability of critic model} 
on core-set. ``e" denotes epoch.}
\vspace{-6pt}
\scalebox{0.99}{
\begin{tabular}{c|cc}
\hline
Critic model & DS $\uparrow$ & SR(\%) $\uparrow$  \\
\hline
base (e=0) & 82.58 & 61.36 \\
% critic (e=1) & 79.70 & 56.82  \\
critic (e=5) & 84.04 & 65.91 \\
critic (e=9) & 86.43 & 70.45  \\
critic (e=13) & \textbf{88.24} & \textbf{72.73} \\
\hline
\end{tabular}
}
\label{tab:ablation_generalization}
\end{table}

\section{Conclusion}
\label{sec:conclusion}

In this work, we introduce CriticVLA, a novel two-stage framework that redefines the role of Vision-Language-Action (VLA) models in autonomous driving. We move beyond simple action generation by explicitly leveraging the VLA's critic capability. Our paradigm first generates an initial trajectory and then refines it via VLA-based multimodal critique, leading to high-quality driving behaviors. To support this, we construct CriticDrive, a large-scale synthetic dataset with 12.9 million annotated trajectories to enhance critic generation. Extensive closed-loop evaluations on Bench2Drive demonstrate CriticVLA's superior performance over state-of-the-art baselines. Our theoretical and empirical analyses confirm that the critic mechanism significantly improves decision quality, establishing a promising ``judge, then drive" paradigm for complex real-world driving.

\clearpage

\section*{Impact Statement}
This work advances the field of Machine Learning by proposing a novel critic-driven paradigm for autonomous driving. The developed framework, CriticVLA, and its accompanying dataset aim to enhance the safety and decision-making reliability of autonomous vehicles by enabling self-critical refinement of driving trajectories. While the direct societal impact is the potential improvement of autonomous driving systems, broader deployment would necessitate careful consideration of real-world safety validation, ethical implications of automated decision-making in critical scenarios, and the societal consequences of widespread autonomous vehicle adoption.

% In the unusual situation where you want a paper to appear in the
% references without citing it in the main text, use \nocite
\nocite{langley00}

\bibliography{example_paper}

@String(ECCV= {Eur. Conf. Comput. Vis.})

@String(ICLR = {Int. Conf. Learn. Represent.})

@String(ECCV  = {ECCV})

@String(ICLR  = {ICLR})

@inproceedings{chen2024internvl,
  title={Internvl: Scaling up vision foundation models and aligning for generic visual-linguistic tasks},
  author={Chen, Zhe and Wu, Jiannan and Wang, Wenhai and Su, Weijie and Chen, Guo and Xing, Sen and Zhong, Muyan and Zhang, Qinglong and Zhu, Xizhou and Lu, Lewei and others},
  booktitle={Proceedings of the IEEE/CVF conference on computer vision and pattern recognition},
  pages={24185--24198},
  year={2024}
}

@inproceedings{carion2020end,
  title={End-to-end object detection with transformers},
  author={Carion, Nicolas and Massa, Francisco and Synnaeve, Gabriel and Usunier, Nicolas and Kirillov, Alexander and Zagoruyko, Sergey},
  booktitle={European conference on computer vision},
  pages={213--229},
  year={2020},
  organization={Springer}
}

@article{hu2022lora,
  title={Lora: Low-rank adaptation of large language models.},
  author={Hu, Edward J and Shen, Yelong and Wallis, Phillip and Allen-Zhu, Zeyuan and Li, Yuanzhi and Wang, Shean and Wang, Lu and Chen, Weizhu and others},
  journal={ICLR},
  volume={1},
  number={2},
  pages={3},
  year={2022}
}

@inproceedings{zhong2024moextend,
  title={Moextend: Tuning new experts for modality and task extension},
  author={Zhong, Shanshan and Gao, Shanghua and Huang, Zhongzhan and Wen, Wushao and {\v{Z}}itnik, Marinka and Zhou, Pan},
  booktitle={Proceedings of the 62nd Annual Meeting of the Association for Computational Linguistics (Volume 4: Student Research Workshop)},
  pages={494--505},
  year={2024}
}

@article{huang2025minilongbench,
  title={MiniLongBench: The Low-cost Long Context Understanding Benchmark for Large Language Models},
  author={Huang, Zhongzhan and Ling, Guoming and Zhong, Shanshan and Wu, Hefeng and Lin, Liang},
  journal={arXiv preprint arXiv:2505.19959},
  year={2025}
}

@article{wang2025llava,
  title={Llava-critic-r1: Your critic model is secretly a strong policy model},
  author={Wang, Xiyao and Li, Chunyuan and Yang, Jianwei and Zhang, Kai and Liu, Bo and Xiong, Tianyi and Huang, Furong},
  journal={arXiv preprint arXiv:2509.00676},
  year={2025}
}

@inproceedings{zhong2024let,
  title={Let's think outside the box: Exploring leap-of-thought in large language models with creative humor generation},
  author={Zhong, Shanshan and Huang, Zhongzhan and Gao, Shanghua and Wen, Wushao and Lin, Liang and Zitnik, Marinka and Zhou, Pan},
  booktitle={Proceedings of the IEEE/CVF Conference on Computer Vision and Pattern Recognition},
  pages={13246--13257},
  year={2024}
}

@article{liu2025bridgedrive,
  title={BridgeDrive: Diffusion Bridge Policy for Closed-Loop Trajectory Planning in Autonomous Driving},
  author={Liu, Shu and Chen, Wenlin and Li, Weihao and Wang, Zheng and Yang, Lijin and Huang, Jianing and Zhang, Yipin and Huang, Zhongzhan and Cheng, Ze and Yang, Hao},
  journal={arXiv preprint arXiv:2509.23589},
  year={2025}
}

@article{huang2025causality,
  title={A causality-aware paradigm for evaluating creativity of multimodal large language models},
  author={Huang, Zhongzhan and Zhong, Shanshan and Zhou, Pan and Gao, Shanghua and Zitnik, Marinka and Lin, Liang},
  journal={IEEE Transactions on Pattern Analysis and Machine Intelligence},
  year={2025},
  publisher={IEEE}
}

@article{jia2025drivetransformer,
  title={Drivetransformer: Unified transformer for scalable end-to-end autonomous driving},
  author={Jia, Xiaosong and You, Junqi and Zhang, Zhiyuan and Yan, Junchi},
  journal={arXiv preprint arXiv:2503.07656},
  year={2025}
}

@inproceedings{lee2024prometheus,
  title={Prometheus-vision: Vision-language model as a judge for fine-grained evaluation},
  author={Lee, Seongyun and Kim, Seungone and Park, Sue and Kim, Geewook and Seo, Minjoon},
  booktitle={Findings of the association for computational linguistics ACL 2024},
  pages={11286--11315},
  year={2024}
}

@article{zheng2023judging,
  title={Judging llm-as-a-judge with mt-bench and chatbot arena},
  author={Zheng, Lianmin and Chiang, Wei-Lin and Sheng, Ying and Zhuang, Siyuan and Wu, Zhanghao and Zhuang, Yonghao and Lin, Zi and Li, Zhuohan and Li, Dacheng and Xing, Eric and others},
  journal={Advances in neural information processing systems},
  volume={36},
  pages={46595--46623},
  year={2023}
}

@inproceedings{liu2024improved,
  title={Improved baselines with visual instruction tuning},
  author={Liu, Haotian and Li, Chunyuan and Li, Yuheng and Lee, Yong Jae},
  booktitle={Proceedings of the IEEE/CVF conference on computer vision and pattern recognition},
  pages={26296--26306},
  year={2024}
}

@article{pan2024human,
  title={Human-Centered Design Recommendations for LLM-as-a-judge},
  author={Pan, Qian and Ashktorab, Zahra and Desmond, Michael and Cooper, Martin Santillan and Johnson, James and Nair, Rahul and Daly, Elizabeth and Geyer, Werner},
  journal={arXiv preprint arXiv:2407.03479},
  year={2024}
}

@article{huang2025routereval,
  title={Routereval: A comprehensive benchmark for routing llms to explore model-level scaling up in llms},
  author={Huang, Zhongzhan and Ling, Guoming and Lin, Yupei and Chen, Yandong and Zhong, Shanshan and Wu, Hefeng and Lin, Liang},
  journal={arXiv preprint arXiv:2503.10657},
  year={2025}
}

@inproceedings{zhong2022cem,
  title={Cem: Machine-human chatting handoff via causal-enhance module},
  author={Zhong, Shanshan and Qin, Jinghui and Huang, Zhongzhan and Li, Daifeng},
  booktitle={Proceedings of the 2022 Conference on Empirical Methods in Natural Language Processing},
  pages={3242--3253},
  year={2022}
}

@inproceedings{liu2025associam,
  title={AssoCiAm: A Benchmark for Evaluating Association Thinking while Circumventing Ambiguity},
  author={Liu, Yifan and Zhao, Wenkuan and Zhong, Shanshan and Qin, Jinghui and Liang, Mingfu and Huang, Zhongzhan and Wen, Wushao},
  booktitle={Proceedings of the 2025 Conference on Empirical Methods in Natural Language Processing},
  pages={5203--5219},
  year={2025}
}

@inproceedings{renz2025simlingo,
  title={Simlingo: Vision-only closed-loop autonomous driving with language-action alignment},
  author={Renz, Katrin and Chen, Long and Arani, Elahe and Sinavski, Oleg},
  booktitle={Proceedings of the Computer Vision and Pattern Recognition Conference},
  pages={11993--12003},
  year={2025}
}

@article{sapkota2025vla,
  title={Vision-language-action models: Concepts, progress, applications and challenges},
  author={Sapkota, Ranjan and Cao, Yang and Roumeliotis, Konstantinos I and Karkee, Manoj},
  journal={arXiv preprint arXiv:2505.04769},
  year={2025}
}

@inproceedings{jia2023think,
  title={Think twice before driving: Towards scalable decoders for end-to-end autonomous driving},
  author={Jia, Xiaosong and Wu, Penghao and Chen, Li and Xie, Jiangwei and He, Conghui and Yan, Junchi and Li, Hongyang},
  booktitle={Proceedings of the IEEE/CVF Conference on Computer Vision and Pattern Recognition},
  pages={21983--21994},
  year={2023}
}

@inproceedings{hu2023uniad,
  title={Planning-oriented autonomous driving},
  author={Hu, Yihan and Yang, Jiazhi and Chen, Li and Li, Keyu and Sima, Chonghao and Zhu, Xizhou and Chai, Siqi and Du, Senyao and Lin, Tianwei and Wang, Wenhai and others},
  booktitle={Proceedings of the IEEE/CVF conference on computer vision and pattern recognition},
  pages={17853--17862},
  year={2023}
}

@inproceedings{jiang2023vad,
  title={Vad: Vectorized scene representation for efficient autonomous driving},
  author={Jiang, Bo and Chen, Shaoyu and Xu, Qing and Liao, Bencheng and Chen, Jiajie and Zhou, Helong and Zhang, Qian and Liu, Wenyu and Huang, Chang and Wang, Xinggang},
  booktitle={Proceedings of the IEEE/CVF International Conference on Computer Vision},
  pages={8340--8350},
  year={2023}
}

@inproceedings{zheng2024genad,
  title={Genad: Generative end-to-end autonomous driving},
  author={Zheng, Wenzhao and Song, Ruiqi and Guo, Xianda and Zhang, Chenming and Chen, Long},
  booktitle={European Conference on Computer Vision},
  pages={87--104},
  year={2024},
  organization={Springer}
}

@article{ma2024surveyvla,
  title={A survey on vision-language-action models for embodied ai},
  author={Ma, Yueen and Song, Zixing and Zhuang, Yuzheng and Hao, Jianye and King, Irwin},
  journal={arXiv preprint arXiv:2405.14093},
  year={2024}
}

@inproceedings{nuscenes2019,
  title={nuscenes: A multimodal dataset for autonomous driving},
  author={Caesar, Holger and Bankiti, Varun and Lang, Alex H and Vora, Sourabh and Liong, Venice Erin and Xu, Qiang and Krishnan, Anush and Pan, Yu and Baldan, Giancarlo and Beijbom, Oscar},
  booktitle={Proceedings of the IEEE/CVF conference on computer vision and pattern recognition},
  pages={11621--11631},
  year={2020}
}

@inproceedings{waymo2024open,
  title={Scalability in perception for autonomous driving: Waymo open dataset},
  author={Sun, Pei and Kretzschmar, Henrik and Dotiwalla, Xerxes and Chouard, Aurelien and Patnaik, Vijaysai and Tsui, Paul and Guo, James and Zhou, Yin and Chai, Yuning and Caine, Benjamin and others},
  booktitle={Proceedings of the IEEE/CVF conference on computer vision and pattern recognition},
  pages={2446--2454},
  year={2020}
}

@article{wilson2023argoverse2,
  title={Argoverse 2: Next generation datasets for self-driving perception and forecasting},
  author={Wilson, Benjamin and Qi, William and Agarwal, Tanmay and Lambert, John and Singh, Jagjeet and Khandelwal, Siddhesh and Pan, Bowen and Kumar, Ratnesh and Hartnett, Andrew and Pontes, Jhony Kaesemodel and others},
  journal={arXiv preprint arXiv:2301.00493},
  year={2023}
}

@article{neurips2024drivingdojo,
  title={Drivingdojo dataset: Advancing interactive and knowledge-enriched driving world model},
  author={Wang, Yuqi and Cheng, Ke and He, Jiawei and Wang, Qitai and Dai, Hengchen and Chen, Yuntao and Xia, Fei and Zhang, Zhao-Xiang},
  journal={Advances in Neural Information Processing Systems},
  volume={37},
  pages={13020--13034},
  year={2024}
}

@article{fu2025orion,
  title={Orion: A holistic end-to-end autonomous driving framework by vision-language instructed action generation},
  author={Fu, Haoyu and Zhang, Diankun and Zhao, Zongchuang and Cui, Jianfeng and Liang, Dingkang and Zhang, Chong and Zhang, Dingyuan and Xie, Hongwei and Wang, Bing and Bai, Xiang},
  journal={arXiv preprint arXiv:2503.19755},
  year={2025}
}

@inproceedings{chi2024covla,
  title={{CoVLA}: Comprehensive Vision-Language-Action Dataset for Autonomous Driving},
  author={Chi, Haohan and Gao, Huan-ang and Liu, Ziming and others},
  booktitle={WACV},
  year={2025}
}

@inproceedings{dosovitskiy2017carla,
  title={CARLA: An open urban driving simulator},
  author={Dosovitskiy, Alexey and Ros, German and Codevilla, Felipe and Lopez, Antonio and Koltun, Vladlen},
  booktitle={Conference on robot learning},
  pages={1--16},
  year={2017},
  organization={PMLR}
}

@article{jiang2025diffvla,
  title={DiffVLA: Vision-Language Guided Diffusion Planning for Autonomous Driving},
  author={Jiang, Anqing and Gao, Yu and Sun, Zhigang and others},
  journal={arXiv preprint arXiv:2505.19381},
  year={2025}
}

@article{zhang2025safeauto,
  title={SafeAuto: Knowledge-Enhanced Safe Autonomous Driving with Multimodal Foundation Models},
  author={Zhang, Jiawei and Yang, Xuan and Wang, Taiqi and Yao, Yu and Petiushko, Aleksandr and Li, Bo},
  journal={arXiv preprint arXiv:2503.00211},
  year={2025}
}

@article{sima2024drivelm,
  title={DriveLM: Driving with Graph Visual Question Answering},
  author={Sima, Chonghao and Renz, Katrin and Chitta, Kashyap and others},
  journal={ECCV},
  year={2024}
}

@article{liu2023visual,
  title={Visual instruction tuning},
  author={Liu, Haotian and Li, Chunyuan and Wu, Qingyang and Lee, Yong Jae},
  journal={Advances in neural information processing systems},
  volume={36},
  pages={34892--34916},
  year={2023}
}

@article{yuan2024rag,
  title={Rag-driver: Generalisable driving explanations with retrieval-augmented in-context learning in multi-modal large language model},
  author={Yuan, Jianhao and Sun, Shuyang and Omeiza, Daniel and Zhao, Bo and Newman, Paul and Kunze, Lars and Gadd, Matthew},
  journal={arXiv preprint arXiv:2402.10828},
  year={2024}
}

@article{jia2024bench2drive,
  title={Bench2drive: Towards multi-ability benchmarking of closed-loop end-to-end autonomous driving},
  author={Jia, Xiaosong and Yang, Zhenjie and Li, Qifeng and Zhang, Zhiyuan and Yan, Junchi},
  journal={arXiv preprint arXiv:2406.03877},
  year={2024}
}

@article{zhou2024vision,
  title={Vision language models in autonomous driving: A survey and outlook},
  author={Zhou, Xingcheng and Liu, Mingyu and Yurtsever, Ekim and Zagar, Bare Luka and Zimmer, Walter and Cao, Hu and Knoll, Alois C},
  journal={IEEE Transactions on Intelligent Vehicles},
  year={2024},
  publisher={IEEE}
}

@article{cui2025chain,
  title={Chain-of-Thought for Autonomous Driving: A Comprehensive Survey and Future Prospects},
  author={Cui, Yixin and Lin, Haotian and Yang, Shuo and Wang, Yixiao and Huang, Yanjun and Chen, Hong},
  journal={arXiv preprint arXiv:2505.20223},
  year={2025}
}

@article{wang2024drivecot,
  title={Drivecot: Integrating chain-of-thought reasoning with end-to-end driving},
  author={Wang, Tianqi and Xie, Enze and Chu, Ruihang and Li, Zhenguo and Luo, Ping},
  journal={arXiv preprint arXiv:2403.16996},
  year={2024}
}

@article{kim2024openvla,
  title={Openvla: An open-source vision-language-action model},
  author={Kim, Moo Jin and Pertsch, Karl and Karamcheti, Siddharth and Xiao, Ted and Balakrishna, Ashwin and Nair, Suraj and Rafailov, Rafael and Foster, Ethan and Lam, Grace and Sanketi, Pannag and others},
  journal={arXiv preprint arXiv:2406.09246},
  year={2024}
}

@article{achiam2023gpt,
  title={Gpt-4 technical report},
  author={Achiam, Josh and Adler, Steven and Agarwal, Sandhini and Ahmad, Lama and Akkaya, Ilge and Aleman, Florencia Leoni and Almeida, Diogo and Altenschmidt, Janko and Altman, Sam and Anadkat, Shyamal and others},
  journal={arXiv preprint arXiv:2303.08774},
  year={2023}
}

@article{oquab2023dinov2,
  title={Dinov2: Learning robust visual features without supervision},
  author={Oquab, Maxime and Darcet, Timoth{\'e}e and Moutakanni, Th{\'e}o and Vo, Huy and Szafraniec, Marc and Khalidov, Vasil and Fernandez, Pierre and Haziza, Daniel and Massa, Francisco and El-Nouby, Alaaeldin and others},
  journal={arXiv preprint arXiv:2304.07193},
  year={2023}
}

@article{li2023driveLM,
  title={DriveLM: Driving with Graph Visual Question Answering},
  author={Li, Y. and Chen, L. and Han, X. and others},
  journal={arXiv preprint arXiv:2312.14140},
  year={2023}
}

@article{jiang2025survey,
  title={A Survey on Vision-Language-Action Models for Autonomous Driving},
  author={Jiang, Sicong and Huang, Zilin and Qian, Kangan and Luo, Ziang and Zhu, Tianze and Zhong, Yang and Tang, Yihong and Kong, Menglin and Wang, Yunlong and Jiao, Siwen and others},
  journal={arXiv preprint arXiv:2506.24044},
  year={2025}
}

@inproceedings{jia2023driveadapter,
  title={Driveadapter: Breaking the coupling barrier of perception and planning in end-to-end autonomous driving},
  author={Jia, Xiaosong and Gao, Yulu and Chen, Li and Yan, Junchi and Liu, Patrick Langechuan and Li, Hongyang},
  booktitle={Proceedings of the IEEE/CVF International Conference on Computer Vision},
  pages={7953--7963},
  year={2023}
}

@article{lee2023rlaif,
 author = {Lee, Harrison and Phatale, Samrat and Mansoor, Hassan and Mesnard, Thomas and Ferret, Johan and Lu, Kellie and Bishop, Colton and Hall, Ethan and Carbune, Victor and Rastogi, Abhinav and others},
 journal = {ArXiv preprint},
 title = {Rlaif: Scaling reinforcement learning from human feedback with ai feedback},
 url = {https://arxiv.org/abs/2309.00267},
 volume = {abs/2309.00267},
 year = {2023}
}

@inproceedings{sun2024salmon,
 author = {Sun, Zhiqing and Shen, Yikang and Zhang, Hongxin and Zhou, Qinhong and Chen, Zhenfang and Cox, David Daniel and Yang, Yiming and Gan, Chuang},
 booktitle = {The Twelfth International Conference on Learning Representations},
 title = {SALMON: Self-Alignment with Instructable Reward Models},
 year = {2024}
}

@inproceedings{bai2024benchmarking,
 author = {Yushi Bai and
Jiahao Ying and
Yixin Cao and
Xin Lv and
Yuze He and
Xiaozhi Wang and
Jifan Yu and
Kaisheng Zeng and
Yijia Xiao and
Haozhe Lyu and
Jiayin Zhang and
Juanzi Li and
Lei Hou},
 bibsource = {dblp computer science bibliography, https://dblp.org},
 booktitle = {Advances in Neural Information Processing Systems 36: Annual Conference
on Neural Information Processing Systems 2023, NeurIPS 2023, New Orleans,
LA, USA, December 10 - 16, 2023},
 editor = {Alice Oh and
Tristan Naumann and
Amir Globerson and
Kate Saenko and
Moritz Hardt and
Sergey Levine},
 title = {Benchmarking Foundation Models with Language-Model-as-an-Examiner},
 year = {2023}
}

@article{liang2023encouraging,
 author = {Liang, Tian and He, Zhiwei and Jiao, Wenxiang and Wang, Xing and Wang, Yan and Wang, Rui and Yang, Yujiu and Tu, Zhaopeng and Shi, Shuming},
 journal = {ArXiv preprint},
 title = {Encouraging divergent thinking in large language models through multi-agent debate},
 url = {https://arxiv.org/abs/2305.19118},
 volume = {abs/2305.19118},
 year = {2023}
}

@inproceedings{zhu2024starling,
 author = {Zhu, Banghua and Frick, Evan and Wu, Tianhao and Zhu, Hanlin and Ganesan, Karthik and Chiang, Wei-Lin and Zhang, Jian and Jiao, Jiantao},
 booktitle = {First Conference on Language Modeling},
 title = {Starling-7b: Improving helpfulness and harmlessness with rlaif},
 year = {2024}
}

@article{yang2023auto,
 author = {Yang, Hui and Yue, Sifu and He, Yunzhong},
 journal = {ArXiv preprint},
 title = {Auto-gpt for online decision making: Benchmarks and additional opinions},
 url = {https://arxiv.org/abs/2306.02224},
 volume = {abs/2306.02224},
 year = {2023}
}

@inproceedings{papineni2002bleu,
 address = {Philadelphia, Pennsylvania, USA},
 author = {Papineni, Kishore  and
Roukos, Salim  and
Ward, Todd  and
Zhu, Wei-Jing},
 booktitle = {Proceedings of the 40th Annual Meeting of the Association for Computational Linguistics},
 doi = {10.3115/1073083.1073135},
 editor = {Isabelle, Pierre  and
Charniak, Eugene  and
Lin, Dekang},
 pages = {311--318},
 publisher = {Association for Computational Linguistics},
 title = {{B}leu: a Method for Automatic Evaluation of Machine Translation},
 url = {https://aclanthology.org/P02-1040},
 year = {2002}
}

@inproceedings{lin2004rouge,
 address = {Barcelona, Spain},
 author = {Lin, Chin-Yew},
 booktitle = {Text Summarization Branches Out},
 pages = {74--81},
 publisher = {Association for Computational Linguistics},
 title = {{ROUGE}: A Package for Automatic Evaluation of Summaries},
 url = {https://aclanthology.org/W04-1013},
 year = {2004}
}

@inproceedings{zhangbertscore,
 author = {Tianyi Zhang and
Varsha Kishore and
Felix Wu and
Kilian Q. Weinberger and
Yoav Artzi},
 bibsource = {dblp computer science bibliography, https://dblp.org},
 booktitle = {8th International Conference on Learning Representations, {ICLR} 2020,
Addis Ababa, Ethiopia, April 26-30, 2020},
 publisher = {OpenReview.net},
 title = {BERTScore: Evaluating Text Generation with {BERT}},
 url = {https://openreview.net/forum?id=SkeHuCVFDr},
 year = {2020}
}

@inproceedings{yuan2021bartscore,
 author = {Weizhe Yuan and
Graham Neubig and
Pengfei Liu},
 bibsource = {dblp computer science bibliography, https://dblp.org},
 booktitle = {Advances in Neural Information Processing Systems 34: Annual Conference
on Neural Information Processing Systems 2021, NeurIPS 2021, December
6-14, 2021, virtual},
 editor = {Marc'Aurelio Ranzato and
Alina Beygelzimer and
Yann N. Dauphin and
Percy Liang and
Jennifer Wortman Vaughan},
 pages = {27263--27277},
 title = {BARTScore: Evaluating Generated Text as Text Generation},
 year = {2021}
}

@article{jaech2024openai,
  title={OpenAI o1 System Card},
  author={Jaech, Aaron and Kalai, Adam and Lerer, Adam and Richardson, Adam and El-Kishky, Ahmed and Low, Aiden and Helyar, Alec and Madry, Aleksander and Beutel, Alex and Carney, Alex and others},
  journal={arXiv preprint arXiv:2412.16720},
  year={2024}
}

@article{li2023prd,
 author = {Li, Ruosen and Patel, Teerth and Du, Xinya},
 journal = {ArXiv preprint},
 title = {Prd: Peer rank and discussion improve large language model based evaluations},
 url = {https://arxiv.org/abs/2307.02762},
 volume = {abs/2307.02762},
 year = {2023}
}

@inproceedings{wang2023chatgpt,
  title={Is ChatGPT a Good NLG Evaluator? A Preliminary Study},
  author={Wang, Jiaan and Liang, Yunlong and Meng, Fandong and Sun, Zengkui and Shi, Haoxiang and Li, Zhixu and Xu, Jinan and Qu, Jianfeng and Zhou, Jie},
  booktitle={Proceedings of the 4th New Frontiers in Summarization Workshop},
  pages={1--11},
  year={2023}
}

@article{chiang2023closer,
  title={A closer look into automatic evaluation using large language models},
  author={Chiang, Cheng-Han and Lee, Hung-yi},
  journal={arXiv preprint arXiv:2310.05657},
  year={2023}
}

@inproceedings{sun2022bertscore,
  title={BERTScore is Unfair: On Social Bias in Language Model-Based Metrics for Text Generation},
  author={Sun, Tianxiang and He, Junliang and Qiu, Xipeng and Huang, Xuan-Jing},
  booktitle={Proceedings of the 2022 Conference on Empirical Methods in Natural Language Processing},
  pages={3726--3739},
  year={2022}
}

@inproceedings{chen2024mllm,
  title={Mllm-as-a-judge: Assessing multimodal llm-as-a-judge with vision-language benchmark},
  author={Chen, Dongping and Chen, Ruoxi and Zhang, Shilin and Wang, Yaochen and Liu, Yinuo and Zhou, Huichi and Zhang, Qihui and Wan, Yao and Zhou, Pan and Sun, Lichao},
  booktitle={Forty-first International Conference on Machine Learning}
}

@article{TCP,
  title={Trajectory-guided control prediction for end-to-end autonomous driving: A simple yet strong baseline},
  author={Wu, Penghao and Jia, Xiaosong and Chen, Li and Yan, Junchi and Li, Hongyang and Qiao, Yu},
  journal={Advances in Neural Information Processing Systems},
  volume={35},
  pages={6119--6132},
  year={2022}
}

@inproceedings{MomAD,
  title={Don't Shake the Wheel: Momentum-Aware Planning in End-to-End Autonomous Driving},
  author={Song, Ziying and Jia, Caiyan and Liu, Lin and Pan, Hongyu and Zhang, Yongchang and Wang, Junming and Zhang, Xingyu and Xu, Shaoqing and Yang, Lei and Luo, Yadan},
  booktitle={Proceedings of the Computer Vision and Pattern Recognition Conference},
  pages={22432--22441},
  year={2025}
}

@article{zimmerlin2024hidden,
  title={Hidden biases of end-to-end driving datasets},
  author={Zimmerlin, Julian and Bei{\ss}wenger, Jens and Jaeger, Bernhard and Geiger, Andreas and Chitta, Kashyap},
  journal={arXiv preprint arXiv:2412.09602},
  year={2024}
}

@article{gao2024mini,
  title={Mini-internvl: a flexible-transfer pocket multi-modal model with 5\% parameters and 90\% performance},
  author={Gao, Zhangwei and Chen, Zhe and Cui, Erfei and Ren, Yiming and Wang, Weiyun and Zhu, Jinguo and Tian, Hao and Ye, Shenglong and He, Junjun and Zhu, Xizhou and others},
  journal={Visual Intelligence},
  volume={2},
  number={1},
  pages={32},
  year={2024},
  publisher={Springer}
}

@article{team2024qwen2,
  title={Qwen2 technical report},
  author={Team, Qwen and others},
  journal={arXiv preprint arXiv:2407.10671},
  volume={2},
  number={3},
  year={2024}
}

@article{loshchilov2017decoupled,
  title={Decoupled weight decay regularization},
  author={Loshchilov, Ilya and Hutter, Frank},
  journal={arXiv preprint arXiv:1711.05101},
  year={2017}
}

@inproceedings{rasley2020deepspeed,
  title={Deepspeed: System optimizations enable training deep learning models with over 100 billion parameters},
  author={Rasley, Jeff and Rajbhandari, Samyam and Ruwase, Olatunji and He, Yuxiong},
  booktitle={Proceedings of the 26th ACM SIGKDD international conference on knowledge discovery \& data mining},
  pages={3505--3506},
  year={2020}
}
\bibliographystyle{icml2026}

%%%%%%%%%%%%%%%%%%%%%%%%%%%%%%%%%%%%%%%%%%%%%%%%%%%%%%%%%%%%%%%%%%%%%%%%%%%%%%%
%%%%%%%%%%%%%%%%%%%%%%%%%%%%%%%%%%%%%%%%%%%%%%%%%%%%%%%%%%%%%%%%%%%%%%%%%%%%%%%
% APPENDIX
%%%%%%%%%%%%%%%%%%%%%%%%%%%%%%%%%%%%%%%%%%%%%%%%%%%%%%%%%%%%%%%%%%%%%%%%%%%%%%%
%%%%%%%%%%%%%%%%%%%%%%%%%%%%%%%%%%%%%%%%%%%%%%%%%%%%%%%%%%%%%%%%%%%%%%%%%%%%%%%
\newpage
\appendix
\onecolumn

\textcolor{blue}{\textit{CriticVLA.mp4}} demonstrates examples where CriticVLA successfully performs direction correction and speed correction. Meanwhile, it presents a comparative analysis of the performance among CriticVLA, Simlingo, and the Stage-1 model. For further details, refer to the attached video.

\section{Proof}
\label{sec:proof}
% \onecolumn

\subsection{The proof of Theorem 1}

% \begin{assumption}
\textbf{Assumption 3.1} (Critic Improvement)
The critic is assumed to have a guaranteed improvement ratio $\beta \in (0,1)$ on the training set, i.e., for all $A \in $ training set $T$,
\begin{equation}
Q(C(A)) - Q(A) \ge \beta \, [Q^*(V,L) - Q(A)].
\end{equation}
This means the critic can improve the value of any action by at least a fraction $\beta$ of its gap to the optimal value. A larger value of $\beta$ indicates a better ability of the critic.
% \end{assumption}

% \begin{assumption}
\textbf{Assumption 3.2} (Lipschitz Continuity)
There exists a constant $L_Q > 0$ such that for all $A^{(1)}, A^{(2)} \in \mathcal{A}$,
\begin{equation}
|Q(A^{(1)}) - Q(A^{(2)})| \le L_Q \, d(A^{(1)}, A^{(2)}),
\end{equation}
ensuring that small changes in actions lead to bounded changes in value. For critic function, there exists a constant $L_C > 0$ such that for all $A^{(1)}, A^{(2)} \in \mathcal{A}$,
\begin{equation}
d(C(A^{(1)}), C(A^{(2)})) \le L_C \, d(A^{(1)}, A^{(2)}),
\end{equation}
which indicates that the critic’s refinement of actions is smooth and stable.
% \end{assumption}

\begin{mdframed}[backgroundcolor=gray!8]
\begin{minipage}{\linewidth}
\begin{theorem}
For fixed visual input $V$ and language input $L$, let $A_0$ denote the initial action. Suppose the critic model satisfies Assumption \ref{ass:1} (critic improvement on the training set) and Assumption \ref{ass:2} (Lipschitz continuity). Let $\rho_0 = \rho(A_0)$ denote the distance from $A_0$ to the training set $T$. Then, the value of the improved action $A_1 = C(A_0)$ is lower bounded by
\begin{equation}
    Q(A_1) \ge \beta Q^* +  (1 - \beta) \, Q(A_0) - \rho_0 L_Q (1 - \beta + L_C),
    \label{eq:Qa1}
\end{equation}
where $Q^*$ is the upper bound value for given $V,L$ and the corresponding optimal action.
\end{theorem}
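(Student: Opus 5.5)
Let $\bar A \in T$ be a training action attaining the minimum in Eq.~(\ref{eq:rho}), so $d(A_0,\bar A)=\rho_0$. If the minimum is not attained, take $\bar A$ with $d(A_0,\bar A)\le \rho_0+\varepsilon$ and let $\varepsilon\to 0$ at the end. The plan is to transport the guarantee of Assumption~\ref{ass:1}, which holds only at $\bar A$, back to $A_0$. The transport uses the two Lipschitz bounds of Assumption~\ref{ass:2}: one moves the input of $Q$, the other moves the output of the critic.

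\textbf{Step 1 (improvement at the anchor).} By Assumption~\ref{ass:1} applied to $\bar A\in T$,
\[
Q(C(\bar A)) \ge \beta Q^* + (1-\beta)\,Q(\bar A).
\]

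\textbf{Step 2 (replace $Q(\bar A)$ by $Q(A_0)$).} Lipschitz continuity of $Q$ gives $Q(\bar A)\ge Q(A_0)-L_Q\rho_0$. Since $1-\beta>0$, this yields
\[
Q(C(\bar A)) \ge \beta Q^* + (1-\beta)Q(A_0) - (1-\beta)L_Q\rho_0 .
\]

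\textbf{Step 3 (replace $C(\bar A)$ by $C(A_0)=A_1$).} The Lipschitz property of $C$ gives $d(A_1,C(\bar A))\le L_C\rho_0$. Applying the Lipschitz property of $Q$ again gives $Q(A_1)\ge Q(C(\bar A)) - L_Q L_C \rho_0$.

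Combining the three steps gives
\[
Q(A_1)\ge \beta Q^* + (1-\beta)Q(A_0) - \rho_0 L_Q(1-\beta+L_C),
\]
which is exactly Eq.~(\ref{eq:Qa1}).

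The only delicate point is the one hidden in the notation. Assumption~\ref{ass:1} is stated for $A\in T$ with the pair $(V,L)$ attached to that training sample, whereas the theorem fixes $(V,L)$. I will read Assumption~\ref{ass:1} as holding for every $A\in T$ evaluated under the current fixed $(V,L)$, so that $C$ and $Q$ are the same maps throughout, and I will state this reading explicitly. The attainment of the minimum in $\rho$ is handled by the $\varepsilon$-argument above. Everything else is routine triangle-type inequalities.
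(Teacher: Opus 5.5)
Your proposal is correct and follows the paper's proof. Both anchor at a nearest training action $\hat A\in T$, apply Assumption~\ref{ass:1} there, and transfer back to $A_0$ using the Lipschitz bound on $Q$ (costing $(1-\beta)L_Q\rho_0$) and the Lipschitz bound on $C$ composed with $Q$ (costing $L_QL_C\rho_0$); you only swap the order of these two transfer steps, use an $\varepsilon$-argument where the paper relies on $T$ being finite, and spell out the fixed-$(V,L)$ reading of Assumption~\ref{ass:1} that the paper uses implicitly.
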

\end{minipage}
\end{mdframed}

\begin{proof}

    Since $\rho_0 = \min_{A \in T} d(A_0, A)$ and the training set $T$ contains finite samples, we can find $\hat{A} \in T$ such that
\begin{equation}
     \rho_0 = d(A_0,\hat{A}) = \min_{A \in T} d(A_0, A).
     \label{eq:rho_0}
\end{equation}
Next, by the Lipschitz continuity assumption of the critic function $C$ in Assumption \ref{ass:2}, we have
\begin{equation}
    d(C(A_0),C(\hat{A})) \leq L_C \cdot d(A_0,\hat{A}) = L_C \cdot \rho_0.
    \label{eq:lip1}
\end{equation}
Furthermore, from Eq.~(\ref{eq:lip1}) and Assumption \ref{ass:2} (which also states that $Q$ is $L_Q$-Lipschitz continuous), we obtain
\begin{equation}    
    \begin{aligned}
    |Q(C(A_0)) - Q(C(\hat{A}))| &\leq L_Q d(C(A_0),C(\hat{A})) \\
     &\leq L_Q \cdot L_C \cdot d(A_0,\hat{A}) & \hfill \text{Since Eq.~(\ref{eq:lip1})} \\
     & = L_Q \cdot L_C \cdot \rho_0,
\end{aligned}
\label{eq:qclip}
\end{equation}
Therefore, we have
\begin{equation}
    Q(C(A_0)) \geq Q(C(\hat{A})) - L_QL_C\rho_0,
    \label{eq:temp1}
\end{equation}
Subsequently, as $\hat{A} \in T$, the condition in Assumption \ref{ass:1} gives
\begin{equation}
   Q(C(\hat{A})) - Q(\hat{A}) \ge \beta \, [Q^*(V,L) - Q(\hat{A})].
   \label{eq:temp2}
\end{equation}
Moreover, we rewrite the inequality in Eq.~(\ref{eq:temp1}),
\begin{equation}    
    \begin{aligned}
    Q(C(A_0)) &\geq Q(C(\hat{A})) - L_QL_C\rho_0  & \hfill \text{Since  Eq.~(\ref{eq:temp1})} \\
     &\geq Q(\hat{A}) + \beta \, [Q^*(V,L) - Q(\hat{A})] - L_QL_C\rho_0  & \hfill \text{Since Eq.~(\ref{eq:temp2})} \\
     & = (1-\beta) Q(\hat{A}) + \beta Q^* - L_QL_C\rho_0.
\end{aligned}
\label{eq:qclip}
\end{equation}
where $Q^* = Q^*(V,L)$. Note that $A_1 = C(A_0)$, we further have
\begin{equation}    
    \begin{aligned}
    Q(A_1) &= Q(C(A_0)) \\
     &\geq (1-\beta) Q(\hat{A}) + \beta Q^* - L_QL_C\rho_0  & \hfill \text{Since Eq.~(\ref{eq:qclip})} \\
     & \geq (1-\beta) \cdot [Q(A_0) - L_Q\rho_0 ] + \beta Q^* - L_QL_C\rho_0 & \hfill \text{Assumption \ref{ass:2}}\\
     & = \beta Q^* + \underbrace{\colorbox[rgb]{0.854,0.910,0.988}{$(1 - \beta) \, Q(A_0)$}}_{\text{For Stage-1  }} - \underbrace{\colorbox[rgb]{1,0.8,0.8}{$\rho_0 L_Q (1 - \beta + L_C)$}}_{\text{For Stage-2  }}.
\end{aligned}
\label{eq:qclip1}
\end{equation}

    \qedhere
\end{proof}

\subsection{The proof of Theorem 2}

\begin{mdframed}[backgroundcolor=gray!8]
\begin{minipage}{\linewidth}
\begin{theorem}
For the initial action $A_0$ under given input $V$ and $L$ in Theorem \ref{theo:1}, Let $A_k = C(A_{k-1}), k \geq 1$, and $\rho_k = \rho(A_k)$ denote the distance from $A_k$ to the training set $T$. A sufficient condition for the sequence $\{Q(A_k)\}$ to be monotonically increasing is that
\begin{equation}
  \rho_k \leq c_\beta\cdot [Q^* - Q(A_k)],
  % \label{eq:Qa2}
\end{equation}
where $Q^*$ is the upper bound value for given $V,L$ and the corresponding optimal action. $c_\beta$ is the constant related to $\beta$ and $c_\beta>0$.
\end{theorem}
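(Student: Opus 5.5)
Apply Theorem~\ref{theo:1} one step at a time. The iteration starting from $A_k$ in place of $A_0$ is exactly the situation of Theorem~\ref{theo:1}, since only $V,L$ are fixed and the proof used nothing about $A_0$ beyond its distance $\rho_0$ to $T$. So for every $k\ge 0$ we get
\begin{equation*}
Q(A_{k+1}) \ge \beta Q^* + (1-\beta)Q(A_k) - \rho_k L_Q(1-\beta+L_C).
\end{equation*}

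Subtracting $Q(A_k)$ from both sides gives
\begin{equation*}
Q(A_{k+1}) - Q(A_k) \ge \beta\,[Q^* - Q(A_k)] - \rho_k L_Q(1-\beta+L_C).
\end{equation*}
Monotone increase $Q(A_{k+1})\ge Q(A_k)$ therefore holds whenever the right-hand side is nonnegative. That is,
\[
\rho_k \le \frac{\beta}{L_Q(1-\beta+L_C)}\,[Q^*-Q(A_k)].
\]
We define
\[
c_\beta := \frac{\beta}{L_Q(1-\beta+L_C)}.
\]
This constant is strictly positive because $\beta\in(0,1)$ and $L_Q,L_C>0$. It depends on $\beta$ and on the Lipschitz constants from Assumption~\ref{ass:2}, which are fixed. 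If the stated condition holds for every $k$, the inequality chains and $\{Q(A_k)\}$ is nondecreasing. Strict increase follows whenever the inequality on $\rho_k$ is strict, or when $\rho_k=0$ and $Q(A_k)<Q^*$. I would state the result in the nondecreasing sense to match Eq.~(\ref{eq:qq}).

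The only delicate point is justifying the reuse of Theorem~\ref{theo:1} at every step. The Lipschitz bounds in Assumption~\ref{ass:2} hold on all of $\mathcal{A}$. Assumption~\ref{ass:1} is applied only at the nearest training point $\hat A_k\in T$ to $A_k$. This point exists because $T$ is finite, as in the proof of Theorem~\ref{theo:1}. Hence no assumption that $A_k$ lies in $T$ is needed, and the argument is valid uniformly in $k$.

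I would also note two consequences. The condition forces $\rho_k\to0$ if $Q(A_k)\to Q^*$, which gives the remark following the theorem. In the boundary case $Q(A_k)=Q^*$, the condition requires $\rho_k=0$, so $A_k\in T$, and Assumption~\ref{ass:1} then yields $Q(A_{k+1})\ge Q^*$ directly.
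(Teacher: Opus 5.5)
Your proposal is correct and matches the paper's proof step for step. You apply Theorem~\ref{theo:1} with $A_k$ in place of $A_0$, require the resulting lower bound to be at least $Q(A_k)$, and obtain the same constant $c_\beta=\beta/[L_Q(1-\beta+L_C)]$. Your extra justification for reusing Theorem~\ref{theo:1} at every step (the nearest point $\hat A_k\in T$ exists because $T$ is finite) and your remarks on the strict and boundary cases are sound additions that the paper leaves implicit.
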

\end{minipage}
\end{mdframed}

\begin{proof}

Since $\rho_k = \rho(A_k)$, using Theorem 1, we can derive the following lower bound estimate for the $k$-th improvement:
\begin{equation}
    Q(A_{k+1}) \ge \beta Q^* +  (1 - \beta) Q(A_k) - \rho_k L_Q (1 - \beta + L_C),
    \label{eq:temp3}
\end{equation}
where $k\geq 0$. To require the sequence $\{Q(A_k)\}$ to be monotonically increasing means that for all $k$,
\begin{equation}
    Q(A_{k} ) \leq Q(A_{k+1}).
\end{equation}
According to Eq.~(\ref{eq:temp3}), we only need to satisfy a sufficient condition:
\begin{equation}
    Q(A_{k+1}) \ge \beta Q^* +  (1 - \beta) Q(A_k) - \rho_k L_Q (1 - \beta + L_C) \geq Q(A_{k}).
    \label{eq:qak}
\end{equation}
Thus, we have
\begin{equation}    
    \begin{aligned}
    \beta Q^* +  (1 - \beta) Q(A_k) - \rho_k L_Q (1 - \beta + L_C) &\geq Q(A_{k}) & \hfill \text{Since Eq.~(\ref{eq:qak})}\\
    \beta Q^* +  (1 - \beta -1) Q(A_k) &\geq \rho_k L_Q (1 - \beta + L_C) \\
    \beta [Q^* - Q(A_k)] & \geq \rho_k L_Q (1 - \beta + L_C).
\end{aligned}
\label{eq:qclip1}
\end{equation}
Let $c_\beta = \beta/[L_Q(1-\beta+L_C)]$; then we have
\begin{equation}
    \rho_k \leq \frac{\beta}{L_Q(1-\beta+L_C)}[Q^* - Q(A_k)] \equiv c_\beta [Q^* - Q(A_k)].
\end{equation}
Here, since $\beta \in (0,1)$ and the Lipschitz constants $L_Q$ and $L_C$ are both positive, it follows that $c_\beta>0$.

    \qedhere
\end{proof}

% \twocolumn
\subsection{The verification of the assumptions}

As mentioned in the main text, Assumption \ref{ass:1} is established on the training set $T$. It can be reasonably guaranteed as long as the critic model is sufficiently well-trained. Moreover, we do not require the critic to achieve perfect performance (i.e., $\beta \to 1$), instead, $\beta \in (0,1)$ is sufficient, indicating any positive improvement over the initial policy. Hence, this assumption imposes only a moderately weak requirement. Therefore, Assumption \ref{ass:1} is a very mild assumption that does not require numerical verification.

And, the focus of numerical validation lies on Assumption \ref{ass:2}. This section mainly conducts numerical verification for Assumption \ref{ass:2} in the autonomous driving scenario.
Specifically, we sample 20,000 trajectories from 12.9 million annotated trajectories for analysis. First, regarding the part of Assumption \ref{ass:2} related to the $Q$ function: for an action $A$, $Q(A)$ can be defined as the proportion of untriggered risks. 
If $A$ triggers no risks, then $Q(A)=1$. In the experimental setting of this paper, $d(\cdot,\cdot)$ adopts the Euclidean norm.

\begin{figure}[h]
  \centering
 \includegraphics[width=0.9\linewidth]{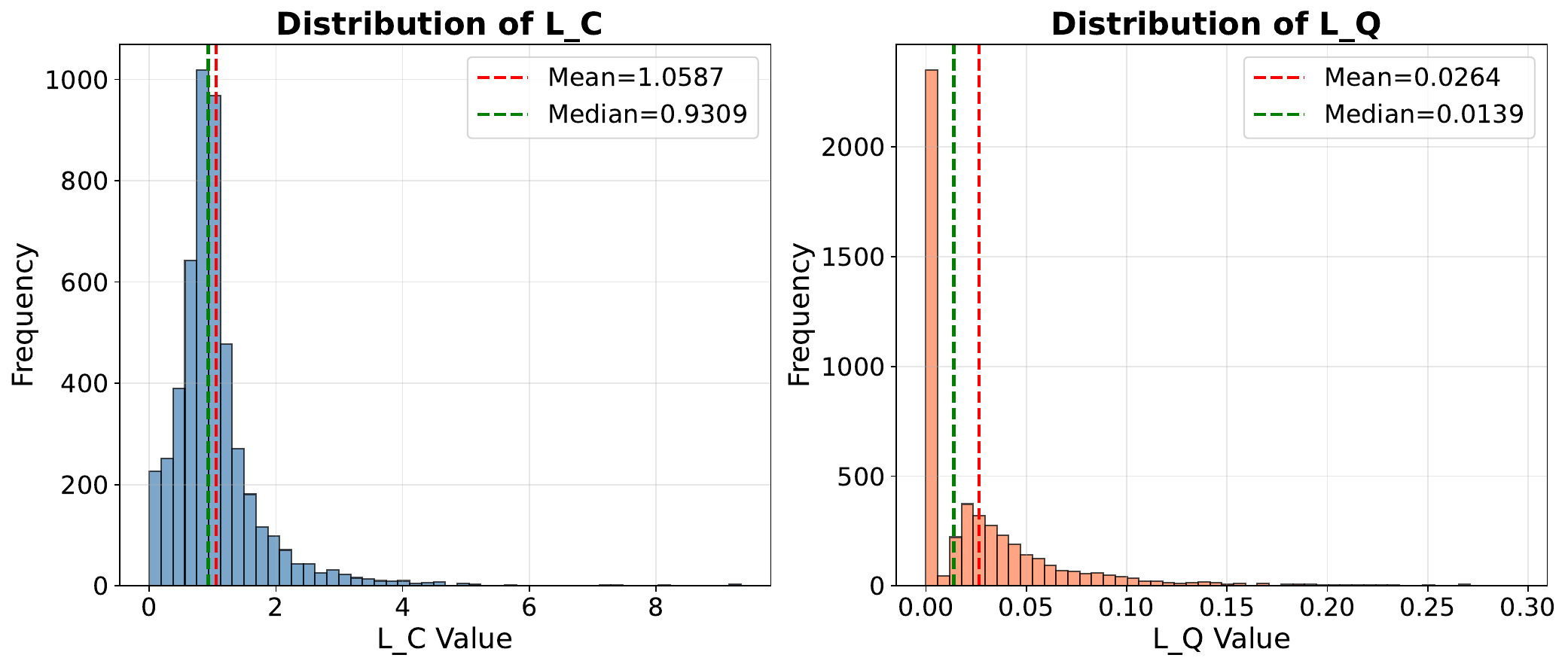}
  \caption{The distribution of Lipschitz constant $L_Q$ and $L_C$ in different scenarios.}
\label{fig:distri}
\end{figure}

As shown in Fig.~(\ref{fig:distri}), the distributions of $L_Q$ and $L_C$ from the sampled data exhibit the following characteristics: they are concentrated near the origin, presenting long-tailed distributions that first rise and then decline. Additionally, both values have obvious upper bounds, meaning it is easy to construct $L_Q$ and $L_C$ that satisfy the inequality requirement of Assumption \ref{ass:2}. In other words, Assumption \ref{ass:2} holds in the autonomous driving scenario involved in this paper.

\section{Model Architecture and Implementation Details}
\label{sec:detail}

\subsection{Overview and Terminology Explanation}
This section details the architecture, training methodology, hyperparameters, and input/output specifications of our proposed CriticVLA. The framework operates sequentially: first generating an initial trajectory, then performing a natural language-based risk assessment, and finally refining the trajectory based on this semantic analysis.

The model receives the following inputs:
\begin{itemize}
    \item Visual Input ($V$): A single front-facing camera image $I \in \mathbb{R}^{H \times W \times 3}$.
    \item State Input ($\mathcal{S}$): The ego vehicle's current speed $v$ (scalar).
    \item Navigation Input ($\mathcal{L}$): A 2D GPS target point defining the long-range driving goal relative to the ego vehicle.
\end{itemize}

\subsubsection{Route Waypoints and Speed Waypoints}
To achieve robust and precise autonomous navigation, our model predicts the future motion of the ego-vehicle using a dual-trajectory representation. This scheme decomposes the planning task into two complementary components: \textit{Geometric Route Waypoints} and \textit{Temporal Speed Waypoints}. This separation allows for the decoupling of lateral and longitudinal control, ensuring both geometric fidelity and dynamic feasibility.

\paragraph{Geometric Route Waypoints.}
We define the geometric path as a sequence of spatial coordinates $\mathbf{P} \in \mathbb{R}^{N_{p} \times 2}$, where $N_{p}$ represents the number of route waypoints. These points are generated with a fixed spatial interval (e.g., 1 meter between consecutive points), independent of the time required to traverse them. 
\begin{itemize}
    \item \textit{Role:} This representation captures the pure geometry of the intended driving path, focusing on curvature, lane alignment, and spatial obstacle avoidance.
    \item \textit{Utility:} By maintaining a high spatial density, $\mathbf{P}$ provides a smooth and continuous reference line for the lateral controller (e.g., a PID steering controller), minimizing cross-track error and ensuring stable steering maneuvers regardless of the vehicle's speed.
\end{itemize}
It should be noted that route waypoints may also be abbreviated as \textit{route} or \textit{path} in the text.

\paragraph{Temporal Speed Waypoints.}
Simultaneously, the model predicts a sequence of temporal speed waypoints $\mathbf{W} \in \mathbb{R}^{N_{w} \times 2}$, where $N_{w}$ denotes the number of future states. Unlike the geometric path, these points are generated at fixed time intervals (e.g., $\Delta t = 0.25$ seconds).
\begin{itemize}
    \item \textit{Role:} This representation implicitly encodes the vehicle's longitudinal velocity profile and acceleration dynamics. The Euclidean distance between consecutive temporal points reflects the intended speed at that specific moment.
    \item \textit{Utility:} $\mathbf{W}$ serves as the reference for the longitudinal controller. It dictates the throttle and brake commands required to manage speed transitions, such as decelerating before a sharp turn or accelerating during a lane change.
\end{itemize}

\subsection{Stage-1: Initial Trajectory Generation}
\paragraph{Input Encoding.} 
We adopt the InternVL2-1B model from the Mini-InternVL family \cite{gao2024mini} as our primary VLA backbone. This model consists of the InternViT-300M-448px Vision Encoder and the Qwen2-0.5B-Instruct \cite{team2024qwen2} Language Model (LLM). The multi-view RGB images are encoded by the Vision Encoder, providing latent visual features. The navigation input ($\mathcal{L}$, the GPS target point) is processed by a dedicated Waypoint Input Adaptor, a 3-layer MLP with ReLU activations, to convert the numerical coordinate inputs into token embeddings of dimension $D_{model}=2048$. These embeddings are then concatenated with specific navigation prompt tokens (\textit{e.g.}, \texttt{<IMG\_CONTEXT>}) for integration by the LLM.

\paragraph{Output Heads.} 
The LLM processes the concatenated visual and language/navigation features. Its output hidden states are projected into the trajectory space via the Driving Adaptor, which employs a DETR-based prediction approach \cite{carion2020end} using learnable queries. We utilize two types of learnable queries within the LLM's structure. First are 512 vision queries, which are used for latent multimodal feature extraction and grounding within the LLM's attention mechanism. Second are 16 trajectory queries, which interact with the contextualized hidden states and are directly mapped to the output trajectory $A_0$ by the output heads.

The Driving Adaptor comprises specialized heads acting on the hidden states corresponding to the Trajectory Queries. (1) The Route Waypoints Head. It predicts $N_p=20$ route waypoints. It uses an MLP (Linear $\rightarrow$ SiLU $\rightarrow$ Linear) to output cumulative deltas from the current ego-position, which are then summed to form the absolute coordinates of $\mathbf{P} \in \mathbb{R}^{20 \times 2}$. (2) The Speed Waypoints Head: Predicts $N_w=10$ speed waypoints, which uses a similar MLP structure to predict the coordinates of $\mathbf{W} \in \mathbb{R}^{10 \times 2}$ at fixed time intervals. (3) The Waypoint Input Adaptor,  which encodes numerical coordinate inputs into token embeddings via a 3-layer MLP with ReLU activations. The resulting output $A_0$ is the rough route tensor $\mathbf{P}$ and the speed profile $\mathbf{W}$, serving as the geometric foundation for the second stage. Prompt details for Stage-1 are provided in Section \ref{sec:prompt}.

\paragraph{Training data.}
To counter the overabundance of easy or uneventful data in driving logs, we implement a Weighted Bucket Sampling strategy following\cite{renz2025simlingo}. The full training data is partitioned into specific buckets corresponding to critical driving situations (\textit{e.g.}, complex intersections, pedestrian crossings, lane changes). During each training epoch, we sample from these buckets with non-uniform probabilities that favor challenging scenarios. This significantly concentrates the training effort on critical decision-making, while still including a sufficient ratio of non-critical data. This approach results in a controlled epoch size of approximately 480,000 samples.

\subsection{Stage-2: Trajectory Refinement with Language Reasoning}
The architecture of the refinement stage is designed to integrate geometric priors from the rough trajectory with high-level semantic reasoning. We reuse the InternVL2-1B backbone, treating it as a multimodal critic to evaluate and optimize the rough trajectory $A_0$ from Stage-1. This stage introduces three primary components: geometric input encoders, the language critic head, and the query-based delta adaptors. We enable Low-Rank Adaptation (LoRA) \cite{hu2022lora} on the backbone (rank $r=32$, $\alpha=64$, dropout $0.1$) targeting all linear layers to make the model trainable for reasoning tasks while maintaining efficiency.

\paragraph{Geometric Input Encoders.}
To ingest the coarse trajectory $A_0$ generated in Stage-1 into the LLM's token space, we use specialized encoders that map the geometric coordinates into the model's embedding space ($d_{model}=512$).
\begin{itemize}
    \item The \textbf{Delta Waypoint Encoder} processes the rough route waypoints ($P_{rough} \in \mathbb{R}^{20 \times 2}$).
    \item The \textbf{Delta Speed Encoder} processes the rough speed waypoints ($V_{rough} \in \mathbb{R}^{10 \times 2}$).
\end{itemize}
Both encoders share a Multi-Layer Perceptron (MLP) architecture: Linear($2 \rightarrow 256$) $\rightarrow$ ReLU $\rightarrow$ Linear($256 \rightarrow 512$) $\rightarrow$ ReLU $\rightarrow$ Linear($512 \rightarrow d_{model}$). These projected embeddings replace placeholder tokens (\textit{e.g.}, \texttt{<wpt>... </wpt>}) within the Delta Prompt, allowing the LLM to contextually reason over the initial plan.

\paragraph{Language Head.}
Parallel to the trajectory heads, the \textit{Delta Language Adaptor} utilizes the standard language modeling head of the backbone. It is responsible for autoregressively generating the risk analysis tokens and action recommendations, which subsequently serve as context for the Delta Adaptor in the trajectory refinement phase.  The analysis includes a structured risk assessment dictionary, evaluating six categories (Collision, Speed, Direction, Pedestrian, Stop Sign, Traffic Light), followed by concrete action recommendations (\textit{e.g.}, ``reduce speed from 7.4 m/s to 3.5 m/s''). This helps to inject semantic reasoning to refine the rough trajectory.

\paragraph{Query-Based Delta Adaptors.}
The final refinement is driven by a set of Learnable Refinement Queries ($Q_{route} \in \mathbb{R}^{1 \times 20 \times d_{model}}$ and $Q_{speed} \in \mathbb{R}^{1 \times 10 \times d_{model}}$). These queries anchor the model's attention to the trajectory refinement task. The output hidden states corresponding to these queries are fed into the DeltaAdaptor. This adaptor predicts the refinement deltas ($\Delta \mathbf{P}, \Delta \mathbf{W}$) rather than absolute coordinates. It employs a dedicated MLP structure: Linear($d_{model} \rightarrow 2d_{mlp}$) $\rightarrow$ SiLU $\rightarrow$ Linear($2d_{mlp} \rightarrow d_{mlp}$) $\rightarrow$ SiLU $\rightarrow$ Linear($d_{mlp} \rightarrow 2$), where $d_{mlp}=256$.

\paragraph{Two-Phase Inference and Output.} 
Stage-2 inference executes in a strict two-phase forward pass driven by the Delta Prompt (which includes a zero-shot example to guide the reasoning format):
\begin{enumerate}
    \item \textbf{Semantic Analysis Phase:} The model first performs an autoregressive decoding pass using the Language Critic Head, generating the complete risk assessment and action recommendations.
    \item \textbf{Refinement Phase:} Using the generated analysis as context, the model employs the Query-based Delta Adapters to directly predict refined trajectories.
\end{enumerate}

\paragraph{Training data.}
The CriticVLA refinement stage is trained exclusively on the CriticDrive dataset, which totals 3,081,848 annotated driving frames. This dataset is logically partitioned into three components: Ground Truth (GT) Data (re-formatted from the stage1 training set), the Model-Generated Subset (MGS) (2,023,499 frames), and the Extra Perturbation-Augmented Subset (EPAS) (1,058,349 frames). Details of this dataset can be found in Section ~\ref{sec:CriticDriveComposition} We evaluate performance under two primary training regimes:
\begin{itemize}
    
\item Base CriticDrive: This regime focuses on training the critic primarily on our generated error samples (MGS). Crucially, the MGS is augmented with 15\% Ground Truth (GT) Data to teach the critic the necessary skill of identifying already-optimal trajectories where the required refinement delta is zero. This configuration yields a controlled epoch size of 480,000 samples.

\item Full CriticDrive (with Augmentation): This regime incorporates the complete set of error samples by including the Extra Perturbation-Augmented Subset (EPAS). The EPAS is included to maximize diversity in failure patterns and enhance the model's refinement capabilities, aligning with the necessity of reducing the distance to the training set ($\rho_0$) as characterized by Theorem 1. Training samples are drawn with a $1:1$ ratio from the combined (MGS + GT Data) pool and the EPAS pool. This results in a full epoch size of $960,000$ samples.

\end{itemize}

% \subsubsection{Introduction to Training Hyperparameters (Stage-2)}
% \subsubsection{Model Architecture Details (Stage-2)}
\subsection{Model Inference Details}
We evaluate the driving capability of CriticVLA on Bench2Drive\cite{jia2024bench2drive} closed-loop benchmark with the CARLA simulator version 0.9.15 \cite{dosovitskiy2017carla}. The evaluation follows a multi-stage pipeline: 

\paragraph{Input Preparation} Multi-view RGB camera images are captured and preprocessed via dynamic resizing. Meanwhile, GPS and IMU sensors provide vehicle localization and orientation data. A route planner generates target points based on the global route plan, which are transformed into ego-centric coordinates along with the current vehicle speed to form the input feature set.

\paragraph{Model Inference} CriticVLA processes multi-modal inputs through a two-stage refinement mechanism. In the first stage, conditioned on visual observations, target points, and language inputs, the model generates initial trajectory waypoints (route waypoints and speed waypoints). Subsequently, a language-guided refinement stage employs greedy sampling to generate natural language descriptions of initial trajectory risk analysis and further action suggestions, which are utilized to refine the initial predictions into better trajectory predictions (route waypoints and speed waypoints).

\paragraph{Vehicle control} The predicted waypoints are interpolated using piecewise cubic Hermite interpolation to ensure spatial smoothness. A dual PID control framework converts the refined trajectory into low-level vehicle control commands: a lateral PID controller calculates steering angles by tracking interpolated route waypoints, while a longitudinal PID controller regulates throttle and brake based on the desired speed derived from speed waypoints. Braking logic is triggered when the desired speed falls below a threshold or the speed ratio exceeds a safety limit. The generated control signals are executed in the CARLA simulation environment to evaluate the autonomous driving performance of the agent.
% We obtain speed from speed waypoints and angle from route waypoints, then we use two PID controllers to get the steering and angle acceleration. 

\subsection{Format of Prompts and Answers}
\label{sec:prompt}
For detailed specifications of the reasoning mechanism, the templates for the prompts and generated answers used are illustrated in Figure~\ref{fig:prompt}.

\begin{figure}%[t]
    \centering
    \includegraphics[width=0.99\linewidth]{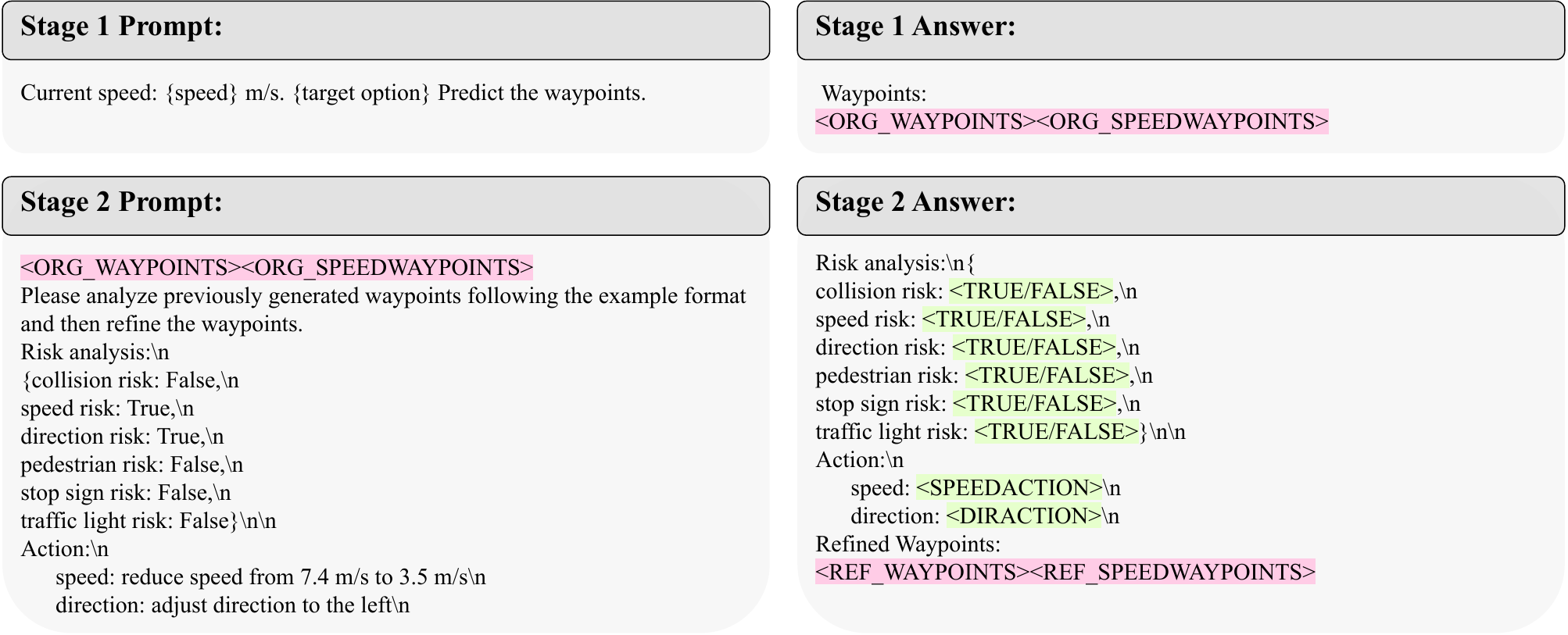}
    % \vspace{-0.3cm}
    \caption{The template of prompt and answer. }
    \label{fig:prompt}
\end{figure}

\subsection{Implementation Details}
\paragraph{Hyperparameters and Optimization.} 
The model is trained on 8 NVIDIA A100 GPUs using DeepSpeed ZeRO Stage-2~\cite{rasley2020deepspeed} with 16-bit mixed precision. We use the AdamW optimizer~\cite{loshchilov2017decoupled} with a learning rate of $3 \times 10^{-5}$, weight decay of $0.1$, and $\beta=(0.9, 0.999)$. The learning rate follows a OneCycleLR schedule with a 5\% warmup period. Gradient clipping is set to 0.3 to prevent divergence.

\paragraph{Data Augmentation}
To enhance robustness, we apply random image augmentations (probability $0.5$) and image shift augmentations (probability $0.5$) during training. The trajectory prediction horizon is set to $T=11$ timesteps (future 2.5s at 4Hz), predicting 20 spatial waypoints.

\paragraph{Training} We leverage DeepSpeed v2~\cite{rasley2020deepspeed} for efficiency and memory management. The fine-tuning of CriticVLA consists of two stages, as detailed below.

\noindent\textit{Stage-1 Fine-tuning.} For the Stage-1 model, we employ a mixed fine-tuning strategy. The vision encoder and all projection layers are fully fine-tuned to maximize their ability to extract driving-relevant visual features. Conversely, the Language Model is fine-tuned using LoRA \cite{hu2022lora} applied to all its linear layers. The LoRA rank dimension is set to $r=32$ and the scaling factor $\alpha$ is 64. Stage-1 is trained for a total of 13 epochs.

\noindent\textit{Stage-2 Fine-tuning.} In the critic stage, we reuse the same InternVL2-1B backbone initialized with the weights from the trained Stage-1 model. Only the parameters related to the Critic Param (as shown in Figure 1(b)) are fine-tuned, while the parameters of the Stage-1 Generator (including the Vision Encoder and LLM) are frozen. The model is trained on the synthetic CriticDrive dataset for 13 epochs. This stage adopts the same LoRA configuration ($r=32, \alpha=64$) and optimization hyperparameters as Stage-1 to ensure consistency and facilitate efficient knowledge transfer.

\paragraph{Loss Function}
The total loss $\mathcal{L}_{total}$ is a weighted sum of:
\begin{equation}
    \mathcal{L}_{total} = \lambda_{lang}\mathcal{L}_{CE} + \lambda_{route}\mathcal{L}_{Smooth-L1}^{route-wps} + \lambda_{speed}\mathcal{L}_{Smooth-L1}^{speed-wps}
\end{equation}
where $\mathcal{L}_{CE}$ is the cross-entropy loss for language generation, and $\mathcal{L}_{L1}$ denotes the smooth-L1 distance between the refined predictions and ground truth waypoints.

\section{Introduction to CriticDrive}
\label{sec:CriticDrive}

\subsection{Construction Method of the CriticDrive Dataset}
\label{sec:CriticDrive_construct}
To enable the training of CriticVLA, we formulate a comprehensive \textbf{\textit{Critic Reasoning Creator}} to automatically construct CriticDrive dataset. This rule-based heuristic engine evaluates predicted trajectory against expert ground truth (GT) demonstrations, environmental constraints, and dynamic actor behaviors. Our construction method decomposes the reasoning process into two stages: \textbf{Risk Analysis}, which identifies potential hazards across four domains, and \textbf{Action Suggestion}, which synthesizes corrective natural language directives.

\subsubsection{Risk Analysis Generation}
The risk analysis generator performs a spatiotemporal evaluation of the predicted trajectories. We define the risk set $\mathcal{R}$ across four dimensions: Lateral Deviation, Longitudinal Kinematics, Collision Safety, and Contextual Complexity.

\paragraph{\textit{Lateral Risk Analysis}} \mbox{} \\

\vspace{-10pt}
\noindent We evaluate the spatial alignment of the predicted route $\mathcal{P}_{pred}$ against the expert ground truth route $\mathcal{P}_{gt}$.
\begin{itemize}
    \item \textbf{Angular Deviation:} We calculate the heading vectors by averaging the coordinates of the final $k=5$ waypoints for both $\mathcal{P}_{pred}$ and $\mathcal{P}_{gt}$. A directional risk is flagged if the angular difference $\Delta \theta$ exceeds the threshold $\tau_{\theta} = 7.5^{\circ}$.
    \item \textbf{Route Offset (Cross-Track Error):} We compute the Cross-Track Error (CTE) of the predicted path relative to the expert trajectory. A \textit{Route Offset} risk is identified if the maximum lateral displacement exceeds $\tau_{lat} = 2.0$ meters.
    \item \textbf{Topological Mismatch:} We classify the route geometry (e.g., Straight, Left/Right Turn, Lane Change) based on cumulative curvature and lateral displacement. A risk is recorded if the predicted topology class differs from the expert class (e.g., predicting ``Straight" when the expert executes a ``Lane Change").
\end{itemize}

\paragraph{\textit{Longitudinal Risk Analysis}} \mbox{} \\

\vspace{-10pt}
\noindent We calculate predicted velocity profile $V_{pred}$ and the ground truth profile $V_{gt}$  from predicted speed waypoints $W_{pred}$ and expert speed waypoints $W_{gt}$. Then, we assess the safety and intent consistency of $V_{pred}$ against  $V_{gt}$ and the legal speed limit $V_{limit}$.
\begin{itemize}
    \item \textbf{Speed Limit Compliance:} A critical risk is flagged if the current speed exceeds a safety margin of the legal limit: $v_{curr} > 0.9 \times V_{limit}$.
    \item \textbf{Speed Deviation:} We compare the predicted average speed (first 3 frames) and future speed (final frame) against expert data. A speed risk is identified if both the relative error exceeds $\tau_{rel} = 20\%$ and the absolute difference exceeds $\tau_{abs} = 0.5$ m/s.
    \item \textbf{Intent Classification:} We classify the velocity trend into four categories: \textit{Accelerating}, \textit{Decelerating}, \textit{Maintaining}, or \textit{Braking-to-Stop}. This is achieved using a time-weighted gradient of the future waypoints. A semantic risk arises if the predicted intent contradicts the expert intent (e.g., agent predicts ``Maintaining" while the expert is ``Braking-to-Stop").
\end{itemize}

\paragraph{\textit{Collision Verification}} \mbox{} \\

\vspace{-10pt}
\noindent To detect imminent hazards, we perform a rigorous intersection test. We project the ego-vehicle's state into the future using a Kinematic Bicycle Model to generate a sequence of 3D Oriented Bounding Boxes (OBBs).
\begin{itemize}
    \item \textbf{Mechanism:} For each timestep $t$ in the forecast horizon, we check for geometric overlap between the ego OBB and the OBBs of all surrounding actors (vehicles, pedestrians).
    \item \textbf{Algorithm:} We utilize the Separating Axis Theorem (SAT) to determine intersection. If no separating plane exists between the ego and an actor, a collision risk is confirmed, and the specific actor ID and class are recorded.
\end{itemize}

\paragraph{\textit{Environmental and Contextual Analysis}} \mbox{} \\

\vspace{-10pt}
\noindent We analyze the static environment and scene complexity to identify latent risks:
\begin{itemize}
    \item \textbf{Scene Complexity:} A scenario is deemed ``Complex" if the number of dynamic participants exceeds $N=6$.
    \item \textbf{Adverse Conditions:} We flag risks for poor visibility (rain, fog, night) or reduced traction (road wetness $> 40\%$).
    \item \textbf{Vulnerable Road Users (VRU):} A specific risk is triggered if a pedestrian is detected within a radial distance of $distance < 10$ meters.
    \item \textbf{Traffic Controls:} Risks are flagged for active Stop Signs, red traffic lights, or traffic rule violations (e.g., entering a forbidden lane).
\end{itemize}

\subsubsection{Action Suggestion Generation}
Based on the aggregated risk profile, our method synthesizes corrective action suggestions. This process employs a hierarchical logic to prioritize safety-critical maneuvers over comfort-related adjustments.

\paragraph{$\diamond$ Corrective Longitudinal Action} \mbox{} \\

\vspace{-10pt}
\noindent We generate speed commands by comparing the predicted speed $v_{pred}$ with the expert reference $v_{gt}$ when a longitudinal risk is present:
\begin{equation}
    A_{speed} = 
    \begin{cases} 
    \text{``Reduce speed from } v_{pred} \text{ to } v_{gt}\text{''} & \text{if } v_{pred} > v_{gt} \\
    \text{``Increase speed from } v_{pred} \text{ to } v_{gt}\text{''} & \text{if } v_{pred} < v_{gt} \\
    \text{``Maintain speed at } v_{pred}\text{''} & \text{otherwise}
    \end{cases}
\end{equation}

\paragraph{\textit{Corrective Lateral Action}} \mbox{} \\

\vspace{-10pt}
\noindent Directional suggestions are derived to negate the observed lateral offset:
\begin{itemize}
    \item If a \textit{Direction Risk} is detected, we determine the offset direction $D_{offset} \in \{\text{Left}, \text{Right}\}$. The system suggests the inverse maneuver: \textit{``Adjust direction to the [Opposite of $D_{offset}$]"} (e.g., if offset is Left, suggest Right).
    \item \textbf{Collision Override:} If a \textit{Collision Risk} is identified, the standard directional suggestion is overridden by a high-priority safety directive: \textit{``Collision risk with [Actor Class], proceed with caution and yield."}
    \item If no deviations are found, the system suggests \textit{``Maintain direction."}
\end{itemize}

\subsection{Composition of CriticDrive}
\label{sec:CriticDriveComposition}
To facilitate robust trajectory critique and refinement, we constructed a large-scale dataset comprising a total of \textbf{3,081,848} annotated driving frames. This dataset is structured into two distinct subsets: \textit{Model-Generated Trajectories} and \textit{Extra Heuristically Perturbed Trajectories}.

\subsubsection{Model-Generated Subset}
The first subset consists of \textbf{2,023,499} frames. For each frame, we employed the Stage-1 Model to generate a corresponding coarse trajectory. These predictions were then aligned with their respective expert ground-truth trajectories to generate dense critic annotations—specifically, risk analysis and action suggestions—following the methodology described in Section~\ref{sec:CriticDrive_construct}. 

Table~\ref{tab:rough} details the distribution of detected risks within this subset. It is crucial to note that the risk assessment is a multi-label classification problem; distinct risk categories frequently co-occur within a single scenario (e.g., a \textit{Pedestrian Risk} may exist simultaneously with a \textit{Traffic Light Risk}, or a \textit{Collision Risk} may accompany a \textit{Speed Risk}). Consequently, the statistics in Table~\ref{tab:rough} represent the frequency of positive instances for each specific risk category relative to the total number of samples.

\begin{table}[h]
\centering
\caption{\textbf{Distribution of Identified Risk Categories in the Model-Generated Subset.} 
    This table details the prevalence of specific risk types detected within the 2,023,499 coarse trajectories generated by the Stage-1 Model. Values represent the percentage of samples flagged with each risk. Note that the risk categories are \textit{non-exclusive}, meaning a single trajectory may exhibit multiple concurrent risks (e.g., both Collision Risk and Speed Risk).}
\vspace{-6pt}
\scalebox{1.0}{
% \begin{tabular}{>{\kern-\tabcolsep}*{8}{>$c<$}<{\kern-\tabcolsep}}
\begin{tabular}{@{}ccccccc@{}}
\toprule
\multirow{2}{*}{\textbf{Percentage}} & Collision & Speed & Direction & Pedestrian & Traffic Light & Stop Sign  \cr 
\cmidrule(lr){2-7}
& 13.22\% & 21.29\% & 36.98\% & 1.70\% & 11.81\% & 6.99\% \\
\bottomrule
\end{tabular}
}
\label{tab:rough}
\end{table}

\subsubsection{Extra Perturbation-Augmented Subset}
To further expand the diversity of failure patterns and enhance the model's refinement capability across varying degrees of trajectory quality, we introduced a second subset derived from \textbf{1,058,349} frames. Rather than relying solely on model outputs, we applied systematic perturbations to the expert trajectories, synthesizing a massive set of \textbf{10,895,598} extra rough trajectories.

Inspired by ~\cite{renz2025simlingo}, this heuristic perturbation strategy corrupts expert demonstrations to generate plausible yet risky trajectories. We categorize these perturbations into three distinct types: longitudinal velocity modifications, lateral lane deviations, and forced collision synthesis. All perturbed trajectories are dynamically verified using a kinematic bicycle model to ensure physical feasibility.

\paragraph{Longitudinal Velocity Modification}
We generate alternative speed profiles while maintaining the original geometric route to simulate behaviors such as over-speeding, unnecessary braking, or failure to adhere to target speeds. The perturbed velocity profiles are synthesized using the profile scaling mechanisms:

We apply a scalar factor $\gamma$ to the expert's speed profile $V_{exp}$. To simulate aggressive driving, we sample $\gamma \sim \mathcal{U}(1.1, 1.5)$. Conversely, to simulate overly cautious driving, we sample $\gamma \sim \mathcal{U}(0.3, 0.9)$.

\paragraph{Forced Lane Deviation}
To simulate unsafe lateral maneuvers, such as illegal lane changes or driving on sidewalks, we geometrically transform the expert route. The system identifies all adjacent zones, including same-direction lanes, opposite-direction lanes (risking head-on collisions), parking lanes, and sidewalks.

For a selected target zone with a lateral offset $W$ relative to the ego-lane, we synthesize a shifted trajectory $\mathcal{P}'$. Let $\mathcal{P} = \{p_1, \dots, p_T\}$ be the original waypoints. We define a start index $k_{start}$ and a transition length $L_{trans}$. The perturbed waypoints are computed by projecting the original points along the local normal vector $\vec{n}_i$:
\begin{equation}
    p'_i = p_i + \alpha_i \cdot W \cdot \vec{n}_i, \quad \text{where } \alpha_i \in [0, 1]
\end{equation}
Here, $\alpha_i$ represents the transition interpolation factor, shifting linearly from $0$ to $1$ over the transition duration. This geometric transformation is subsequently tracked by the bicycle model to generate the final physically grounded trajectory, simulating both successful and prohibited lane changes (e.g., crossing solid lines or entering oncoming traffic).

\paragraph{Forced Collision Synthesis}
We explicitly construct trajectories that result in collisions with static obstacles or dynamic actors to train the model's collision avoidance reasoning. The process proceeds in three steps:

\begin{enumerate}
    \item \textbf{Candidate Selection:} We identify candidate objects $O$ (vehicles, pedestrians, static meshes) within the ego-vehicle's future field of view. For dynamic actors, we utilize their future bounding boxes derived from the simulator's ground truth.
    \item \textbf{Spatiotemporal Matching:} For a selected object $O$ at a future timestep $t_{crash}$, we calculate the precise position $p_{obj}$. We then derive a constant collision speed $v_{crash}$ required for the ego-vehicle to reach $p_{obj}$ exactly at $t_{crash}$:
    \begin{equation}
        v_{crash} = \frac{\| p_{ego} - p_{obj} \| - \delta_{safety}}{\Delta t_{crash}}
    \end{equation}
    where $\delta_{safety}$ accounts for the physical extent of the vehicle and the object.
    \item \textbf{Route Interpolation:} We modify the geometric route to intersect with the target. The trajectory is reconstructed by interpolating between the current ego position and the collision point $p_{obj}$, and then returning to the original route. The ego-vehicle is then simulated along this collision path at speed $v_{crash}$, guaranteeing a spatiotemporal overlap with the obstacle.
\end{enumerate}

The distribution of different perturbation types is presented in Table~\ref{tab:dream}.

\begin{table}[h]
\centering
\caption{\textbf{Composition of Heuristic Perturbation Types in the Augmented Dataset.} 
    We present the statistical distribution of the three augmentation strategies applied to expert trajectories to generate the 10.9 million synthetic samples. The perturbations include \textit{Velocity Modification} (increase / reduce speed), \textit{Forced Lane Deviation} (unjustified lane changes), and \textit{Forced Collision} (intersecting with obstacles), designed to diversify failure patterns for robust training.}
\vspace{-6pt}
\scalebox{1.0}{
% \begin{tabular}{>{\kern-\tabcolsep}*{8}{>$c<$}<{\kern-\tabcolsep}}
\begin{tabular}{@{}ccccc@{}}
\toprule
\multirow{2}{*}{\textbf{Percentage}} & Increase Speed & Reduce Speed & Forced Lane Change & Forced Collision  \cr 
\cmidrule(lr){2-5}
& 36.88\% & 37.18\% & 9.46\% & 16.48\%  \\
\bottomrule
\end{tabular}
}
\label{tab:dream}
\end{table}

\subsubsection{Summary}
Collectively, these two data generation strategies yielded approximately \textbf{12.9 million} pairs of rough trajectories and their corresponding critic annotations. This extensive dataset provides the rich supervisory signal necessary for the model to master the complex tasks of risk identification, strategic action recommendation, and subsequent trajectory refinement.
\begin{table*}[t]
\centering
\caption{\textbf{Multi-Ability Results on Bench2Drive Benchmark.} C/L refers to camera/LiDAR, * denote using expert feature distillation. We conduct three independent trials with different seeds and report the mean and variance of the results.}
\label{tab:multi-ability}
\resizebox{\textwidth}{!}{
% \begin{tabular}{>{\kern-\tabcolsep}*{8}{>$c<$}<{\kern-\tabcolsep}}
\begin{tabular}{@{}lcccccccc@{}}
\toprule
\multirow{2}{*}{\textbf{Method}} & \multirow{2}{*}{\textbf{Modality}} & \multirow{2}{*}{\textbf{Venue}} & \multicolumn{6}{c}{\textbf{Ability (\%) $\uparrow$}}  \cr
\cmidrule(lr){4-9}

& & & Merging & Overtaking & Emergency Brake & Give Way & Traffic Sign & Mean  \cr 
\midrule
TCP*~\cite{TCP} & C & {NeurIPS' 22} & 16.18 & 20.00 & 20.00 & 10.00 & 6.99 & 14.63  \cr
TCP-traj*~ & C & {NeurIPS' 22} & 8.89 & 24.29 & 51.67 & 40.00 & 46.28 & 34.22  \cr
UniAD-Base~\cite{hu2023uniad} & C & {CVPR' 23} & 14.10 & 17.78 & 21.67 & 10.00 & 14.21 & 15.55  \cr
ThinkTwice*~\cite{jia2023think} & C & {CVPR' 23} & 27.38 & 18.42 & 35.82 & 50.00 & 54.23 & 37.17  \cr
VAD~\cite{jiang2023vad} & C & {ICCV' 23} & 8.11 & 24.44 & 18.64 & 20.00 & 19.15 & 18.07  \cr
DriveAdaptor*~\cite{jia2023driveadapter} & C+L & {ICCV' 23} & 28.82 & 26.38 & 48.76 & 50.00 & 56.43 & 42.08  \cr
DriveTrans~\cite{jia2025drivetransformer} & C & {ICLR' 25} & 17.57 & 35.00 & 48.36 & 40.00 & 52.10 & 38.60  \cr
TransFuser++~\cite{zimmerlin2024hidden} & C+L & {arXiv' 24} & 58.75 & 57.77 & 83.33 & 40.00 & 82.11 & 64.39  \cr
ORION~\cite{fu2025orion} & C & {ICCV' 25} & 25.00 & 71.11 & 78.33 & 30.00 & 69.15 & 54.72  \cr
Simlingo~\cite{renz2025simlingo}  & C & {CVPR' 25} & 54.01 & 57.04 & \textbf{88.33} & \textbf{53.33} & \textbf{82.45} & 67.03  \cr
\midrule
\cellcolor[HTML]{F2F2F2} CriticVLA~(\textit{Ours}) & \cellcolor[HTML]{F2F2F2} C & \cellcolor[HTML]{F2F2F2} - & \cellcolor[HTML]{F2F2F2} \textbf{61.28\small{$\pm$1.30}} & \cellcolor[HTML]{F2F2F2} \textbf{76.30\small{$\pm$1.28}} & \cellcolor[HTML]{F2F2F2} \textbf{88.33\small{$\pm$0.00}} & \cellcolor[HTML]{F2F2F2} 50.00\small{$\pm$0.00} & \cellcolor[HTML]{F2F2F2} 81.06\small{$\pm$0.91} & \cellcolor[HTML]{F2F2F2} \textbf{71.39\small{$\pm$0.26}} \cr
\bottomrule
\end{tabular}
}
\end{table*}

\subsection{Examples of the CriticDrive Dataset}
Figure~\ref{fig:CriticDrive1},~\ref{fig:CriticDrive2},~\ref{fig:CriticDrive3},~\ref{fig:CriticDrive4}  present a comprehensive visualization of some data samples from our constructed CriticDrive dataset, illustrating the alignment between geometric trajectories and semantic reasoning. 

The visual component (left panel) depicts the driving scenario from both the egocentric front-view and the simplified Bird's-Eye View bounding box locations. To facilitate analysis, we project future speed waypoints onto these views using a distinct color scheme: \textcolor{green}{\textbf{green}} dots represent the expert ground truth trajectory, while \textcolor{cyan}{\textbf{cyan}} dots indicate the initial ``rough'' trajectory generated by Stage-1 model. The future positions of surrounding dynamic actors are marked in \textcolor{blue}{\textbf{blue}}. Crucially, potential safety hazards are visualized in \textcolor{red}{\textbf{red}}, marking specific points where the rough trajectory poses a collision risk with other traffic participants. Additionally, the bottom section illustrates the velocity profiles, comparing the speed evolution of the expert and rough trajectories over time.

The semantic component (right panel) showcases the generated reasoning chain derived from the visual data. It consists of three segments:
\begin{itemize}
    \item \textbf{Risk Analysis:} A categorical assessment of potential hazards (e.g., \texttt{Collision Risk:True},  \texttt{Speed Risk:False}).
    \item \textbf{Action Suggestions:} Natural language commands (e.g., ``\textit{maintain speed}'', ``\textit{proceed with caution and yield}'') synthesized to mitigate the identified risks.
    \item \textbf{Detail:} A log of the precise quantitative metrics---such as angular offsets, distance to collision, and velocity errors---that serve as the underlying evidence for the risk analysis and action generation.
\end{itemize}
This structured format ensures that the dataset provides not only the correct trajectory but also the interpretability required for safe planning.
\begin{figure}[h]
    \centering
    \includegraphics[width=\linewidth]{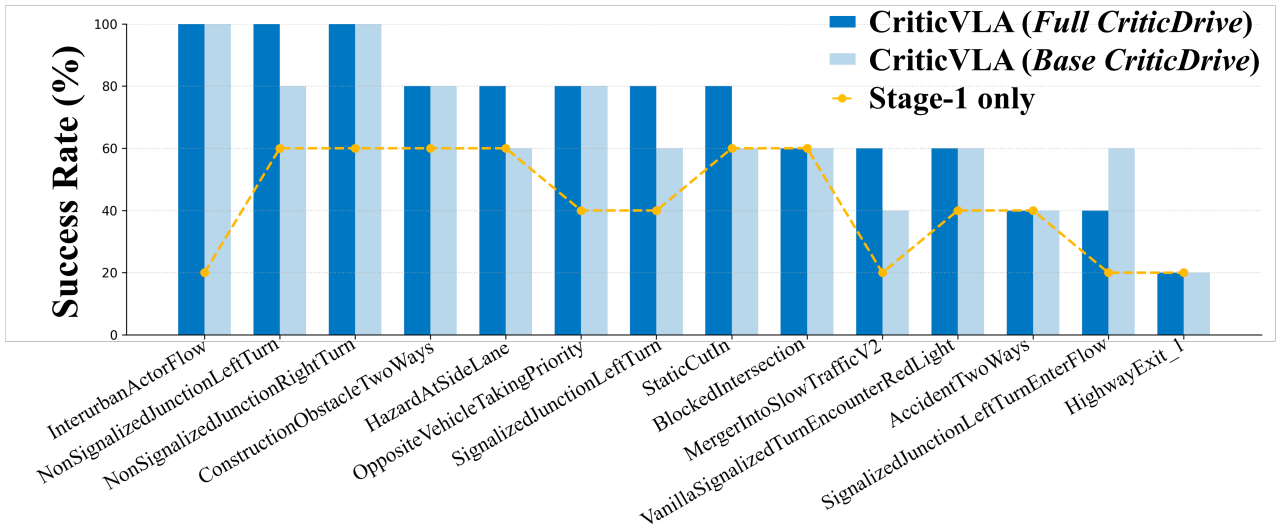} 
    \caption{Performance improvement of our method relative to the Stage-1 Model across different scenarios. The x-axis represents scenario categories. The y-axis denotes the per-scenario  success rate.}
    \label{fig:improvement_bar}
\end{figure}

\section{Core Set Details}
\label{sec:CoreSet}
% \subsection{Construction Method of the Core Set }
The standard Bench2Drive benchmark comprises 220 distinct routes, spanning 44 scenario categories with 5 variations per category. While comprehensive, we observed that a subset of these routes presents relatively trivial driving challenges, where baseline models frequently achieve near-perfect success rates. This performance saturation introduces statistical redundancy and, more critically, masks the nuanced performance gains derived from specific algorithmic improvements.

To address this and facilitate a more discriminative ablation analysis, we established the \textbf{Bench2Drive Core Set}.
\begin{itemize}
    \item \textbf{Selection Criteria:} From each of the 44 semantic scenario categories, we manually selected the single most representative and challenging route. This results in a compact subset of 44 routes that retains the semantic diversity of the full benchmark while filtering out trivial samples that contribute to noise in comparative evaluations.
    \item \textbf{Statistical Validation:} To ensure the Core Set remains a faithful proxy for the full distribution, we compared the performance of our CriticVLA on both sets. The model achieves a Success Rate (SR) of \textbf{72.73\%} on the Core Set, which aligns closely with the \textbf{73.33\%} SR observed on the Full Set (220 routes). This marginal deviation indicates that the Core Set successfully encapsulates the difficulty distribution of the original benchmark without introducing bias.
    \item \textbf{Utility for Ablation:} By eliminating non-discriminative easy samples, the Core Set amplifies the performance signals of different model variants. We utilize this set specifically for ablation studies—such as analyzing the sensitivity to initial rough trajectory quality or the impact of iterative refinement steps—where identifying subtle capability gaps is paramount.
\end{itemize}

Consequently, the Core-Set serves as a rigorous and efficient testbed, enabling us to more clearly discern performance differences between different model variants. Please refer to Table~\ref{tab:CoreSet} for the selected routes details.

\begin{table}[h]
\centering
\caption{\textbf{Core Set Details.} We list the IDs of all 5 routes corresponding to each scenario, as well as the route ID selected by the core set. The scenarios in the table are ordered alphabetically (A-Z). }
\vspace{-6pt}
\begin{tabular}{c|c|c}
\hline
Scenario & All routes & Core Set Selection  \\
\hline
AccidentTwoWays & 6, 36, 70, 172, 174 & 174 \\
Accident & 32, 33, 65, 148, 162 & 33 \\
BlockedIntersection & 20, 21, 49, 200, 204 & 200 \\
ConstructionObstacleTwoWays & 4, 5, 35, 127, 170 & 4 \\
ConstructionObstacle & 30, 31, 155, 160, 161 & 160 \\
ControlLoss & 80, 154, 159, 163, 190 & 80 \\
CrossingBicycleFlow & 54, 55, 56, 57, 58 & 57 \\
DynamicObjectCrossing & 122, 146, 147, 149, 153 & 146 \\
EnterActorFlow & 19, 93, 111, 112, 201 & 201 \\
HardBreakRoute & 79, 152, 158, 188, 192 & 158 \\
HazardAtSideLaneTwoWays & 71, 150, 173, 179, 180 & 180 \\
HazardAtSideLane & 2, 3, 164, 168, 171 & 2 \\
HighwayCutIn & 24, 51, 52, 53, 96 & 96 \\
HighwayExit & 128, 129, 130, 131, 140 & 131 \\
InterurbanActorFlow & 136, 137, 138, 139, 144 & 137 \\
InterurbanAdvancedActorFlow & 132, 133, 134, 141, 142 & 142 \\
InvadingTurn & 41, 42, 81, 82, 83 & 81 \\
MergerIntoSlowTrafficV2 & 135, 143, 184, 186, 189 & 135 \\
MergerIntoSlowTraffic & 22, 23, 50, 94, 95 & 22 \\
NonSignalizedJunctionLeftTurnEnterFlow & 208, 209, 214, 215, 216 & 216 \\
NonSignalizedJunctionLeftTurn & 10, 11, 12, 45, 198 & 10 \\
NonSignalizedJunctionRightTurn & 13, 46, 84, 85, 86 & 84 \\
OppositeVehicleRunningRedLight & 9, 43, 44, 194, 195 & 44 \\
OppositeVehicleTakingPriority & 14, 15, 16, 47, 87 & 47 \\
ParkedObstacleTwoWays & 37, 38, 72, 176, 177 & 177 \\
ParkedObstacle & 1, 34, 156, 165, 169 & 169 \\
ParkingCrossingPedestrian & 63, 64, 123, 145, 151 & 151 \\
ParkingCutIn & 0, 124, 125, 126, 157 & 157 \\
ParkingExit & 7, 77, 78, 183, 191 & 183\\
PedestrianCrossing & 113, 175, 202, 203, 205 & 205\\
SequentialLaneChange & 117, 118, 119, 120, 121 & 117\\
SignalizedJunctionLeftTurnEnterFlow & 210, 213, 217, 218, 219 & 219 \\
SignalizedJunctionLeftTurn & 103, 104, 105, 106, 107 & 107\\
SignalizedJunctionRightTurn & 8, 108, 196, 197, 199 & 199\\
StaticCutIn & 39, 40, 166, 185, 187 & 39 \\
T Junction & 193, 206, 207, 211, 212 & 206\\
VanillaNonSignalizedTurnEncounterStopsign & 29, 61, 62, 101, 102 & 29\\
VanillaNonSignalizedTurn & 26, 27, 28, 60, 100 & 60 \\
VanillaSignalizedTurnEncounterGreenLight & 114, 115, 116, 181, 182 & 181 \\
VanillaSignalizedTurnEncounterRedLight & 25, 59, 97, 98, 99 & 98 \\
VehicleOpensDoorTwoWays & 73, 74, 75, 76, 178 & 75 \\
VehicleTurningRoutePedestrian & 18, 91, 92, 109, 110 & 110 \\
VehicleTurningRoute & 17, 48, 88, 89, 90 & 88\\
YieldToEmergencyVehicle & 66, 67, 68, 69, 167 & 66 \\
\hline
\end{tabular}
\label{tab:CoreSet}
\end{table}

% \subsection{More Results with Core Set}

\section{Additional Results}
\label{sec:example}

\subsection{Bench2Drive Multi-Ability Results }
To assess CriticVLA's performance beyond comprehensive route completion, we employ a fine-grained \textbf{Multi-Ability Evaluation}. Based on the Bench2Drive benchmark, the Multi-Ability evaluation categorizes 220 routes into five distinct driving competencies based on the dominant scenario types encountered.

\vspace{0.2cm}
\noindent Multi-Ability evaluation measures 5 advanced driving capabilities:

\begin{itemize}
    \item \textbf{Overtaking:} This category evaluates the planner's ability to navigate around static or dynamic obstructions blocking the ego lane. It tests the agent's capability to perform safe lane borrowing and encroachment maneuvers while accounting for opposing traffic.
    \begin{itemize}
        \item \small{\textit{Core Competency}: Dynamic path planning and obstacle avoidance.}
        \item \small{\textit{Representative Scenarios}: Accident, ConstructionObstacle, ParkedObstacle, and HazardAtSideLane}.
    \end{itemize}

    \item \textbf{Merging:} This category assesses the agent's proficiency in integrating into dynamic traffic flows and navigating complex intersections. It requires the agent to estimate gaps in traffic and execute precise lateral control.
    \begin{itemize}
        \item \small{\textit{Core Competency:} Gap acceptance and lateral maneuverability in dense traffic.}
        \item \small{\textit{Representative Scenarios:} HighwayExit, InterurbanActorFlow, MergerIntoSlowTraffic, and various unprotected turning scenarios (e.g., NonSignalizedJunctionLeftTurn)}.
    \end{itemize}

    \item \textbf{Emergency Brake:} This category tests the agent's perception-reaction time and longitudinal stability when facing sudden, critical hazards. The agent must execute immediate braking to avoid collisions with vulnerable road users or erratic vehicles.
    \begin{itemize}
        \item \small{\textit{Core Competency:} Imminent hazard response and longitudinal emergency control.}
        \item \small{\textit{Representative Scenarios:} PedestrianCrossing, DynamicObjectCrossing, OppositeVehicleRunningRedLight, and ControlLoss.}
    \end{itemize}

    \item \textbf{Give Way:} This category focuses on the agent's adherence to right-of-way protocols where the ego vehicle is required to yield to other actors, prioritizing social driving behavior over trajectory efficiency.
    \begin{itemize}
        \item \small{\textit{Core Competency:} Priority prediction and yielding compliance.}
        \item \small{\textit{Representative Scenarios:} InvadingTurn (yielding to oncoming traffic encroaching on the lane) and YieldToEmergencyVehicle.}
    \end{itemize}

    \item \textbf{Traffic Signs:} This category evaluates the agent's perception logic and strict adherence to traffic regulations at intersections. It covers a wide range of junction topologies to ensure the agent correctly interprets regulatory signals.
    \begin{itemize}
        \item \small{\textit{Core Competency:} Regulatory perception and intersection handling logic.}
        \item \small{\textit{Representative Scenarios:} SignalizedJunction, NonSignalizedJunction, StopSign, and RedLight encounters.}
    \end{itemize}
\end{itemize}

\vspace{0.2cm}

We show the comparison of Multi-Ability results between our proposed CriticVLA and other baselines in Table~\ref{tab:multi-ability}. Comparing with SOTA baseline Simlingo~\cite{renz2025simlingo}, CriticVLA achieves +4.36\% improvements in the mean ability. Notably, the overtaking ability has increased by \textcolor{red}{+19.26\%} to reach 76.30\%, the merging ability has risen by \textcolor{red}{+7.27\%} to 61.28\%, and the emergency brake ability has achieved the same high score of 88.33\%. The overall improvement in driving capabilities demonstrates the superiority of the critic paradigm in enhancing driving performance. The significant improvement in overtaking aligns with the increased efficiency values in the Table 1 of the main text, indicating that our model does not compromise success rate by reducing driving speed.

The significant performance improvements compared to the baseline can be attributed to the fundamental advantages of CriticVLA paradigm (initial trajectory generation → linguistic risk analysis with improvement suggestions → refined trajectory). The initial trajectory and the risk analysis conducted on it enable the model to better understand the environment and its requirements for driving speed and direction. The subsequent refinement, guided by risk analysis, specifically resolves scenario-specific challenges: in merging, it optimizes lane-transition smoothness and conflict avoidance with surrounding vehicles; in overtaking, it enhances speed matching, safe distance maintenance, and timing judgments. This iterative process is particularly impactful for complex driving scenarios — where context-aware adjustments are critical. This conclusion is consistent with our analysis of the Table 1 in the main text.

\subsection{Performance Comparison by Scenario}
To comprehensively evaluate the effectiveness of our proposed CriticVLA, we conducted a comparative analysis across various driving scenarios. Figure~\ref{fig:improvement_bar} illustrates the performance gain of our CriticVLA compared to the Stage-1 Model. It can be observed that the proposed method significantly improves performance in many scenarios that previously exhibited poor driving outcomes, with some even achieving a 100\% success rate. Furthermore, the difference between training using \textit{Full CriticDrive} and \textit{Base CriticDrive} also indicate that EPAS subset data plays a crucial role in enhancing the model’s capabilities.

\subsubsection{Analysis on Improved Scenario}
Our method demonstrates significant improvements in scenarios requiring strict adherence to safety rules and longitudinal speed regulation. The core advantage stems from the \textbf{explicit risk verification mechanism}. By detecting four primary categories of risk (Collision, Speed, Direction, Context) and generating explicit corrective commands (e.g., \textit{``reduce speed"}, \textit{``adjust direction to the left"}), our method acts as a ``safety filter" that guides the refined trajectory generation.

\begin{itemize}
    \item \textbf{InterurbanActorFlow (+80\%):} In high-speed merging scenarios, the Base Model often fails to match the traffic flow speed. Our \textit{Speed Risk} module explicitly calculates the deviation between the predicted speed and the target flow speed. By generating a specific instruction like \textit{``increase speed from $v_{pred}$ to $v_{gt}$"}, the refinement module is conditioned to aggressively correct the longitudinal profile, ensuring a successful merge.
    
    \item \textbf{NonSignalizedJunctionRightTurn / LeftTurn (+40\% - 60\%):} Unprotected turns are prone to trajectory deviations. The Base Model might generate paths that cut corners or drift into opposing lanes. Our \textit{Direction Risk} analysis detects these angular and lateral offsets early. The resulting suggestion \textit{``adjust direction to the [Correct Side]"} provides a coarse but effective geometric constraint, forcing the refined trajectory back into the correct lane center.
    
    \item \textbf{OppositeVehicleTakingPriority (+20\%):} The improvement here is driven by the \textit{Collision Risk} check. While the Base Model might generate a path that spatially overlaps with an oncoming vehicle's future bounding box, our OBB (Oriented Bounding Box) intersection test flags this as a \texttt{collision\_risk}. The generated command \textit{``collision risk, proceed with caution and yield"} effectively suppresses the aggressive trajectory, forcing the agent to yield.
    
    \item \textbf{VanillaSignalizedTurnEncounterRedLight (+40\%):} The improvement highlights the effectiveness of the \textit{Traffic Light Risk} flag. The rule-based check identifies the red light status and issues a high-priority \textit{``Stop"} suggestion. This explicit semantic signal is more robust than the implicit visual features used by the Base Model, preventing red-light violations.
\end{itemize}

\subsubsection{Failure Case Analysis}

We observe performance degradation in specific scenarios. These failure cases reveal the limitations of the model and provide insights into potential future improvements.

\begin{itemize}
    
    \item \textbf{ConstructionObstacle:} Navigating construction zones requires precise, unstructured geometric maneuvering (e.g., ``nudge 0.5m left"). Our \textit{Direction Risk} module, which typically suggests \textit{``adjust direction to the left/right"}, is too coarse for this task. The generalized instruction may cause the refinement module to over-correct, leading to collisions with cones or barriers.
    
    \item \textbf{SequentialLaneChange:} This scenario requires a smooth, continuous multi-stage maneuver. Our method re-evaluates risks at every frame. If the risk analyzer oscillates between \textit{``maintain direction"} and \textit{``adjust direction"} across consecutive frames due to minor sensor noise, the resulting refined trajectory becomes jagged or unstable, leading to failure in completing the sequence.
    
    \item \textbf{YieldToEmergencyVehicle:} It is important to note that both our proposed method and the Base Model rely exclusively on front-view sensory inputs. Consequently, perceiving and yielding to emergency vehicles approaching from the rear presents an inherent challenge. We acknowledge this as a limitation of the current architecture; however, extending the framework to process multi-view inputs holds significant potential to resolve this issue in future work. Therefore, the 20\% success rate achieved by the Base Model is likely attributable to stochastic variance or chance, rather than genuine learned capability, and thus holds limited reference value.
\end{itemize}

\subsection{More Visualization Results}
As shown in Fig.~\ref{fig:more_vis}, the proposed CriticVLA provides effective corrections for both speed and steering, addressing issues such as directional drift, delayed braking, and insufficient deceleration during driving. In addition, the presented failure cases show that although CriticVLA actively performs corrective actions such as avoiding other vehicles, driving failures may still occur due to other factors.
% \subsection{The Code of the Proposed CriticVLA}

\textcolor{blue}{\textit{CriticVLA.mp4}} demonstrates examples where CriticVLA successfully performs direction correction and speed correction. Meanwhile, it presents a comparative analysis of the performance among CriticVLA, Simlingo, and the Stage-1 model. For further details, refer to the attached video.

\begin{figure}[h]
    \centering
    \includegraphics[width=\linewidth]{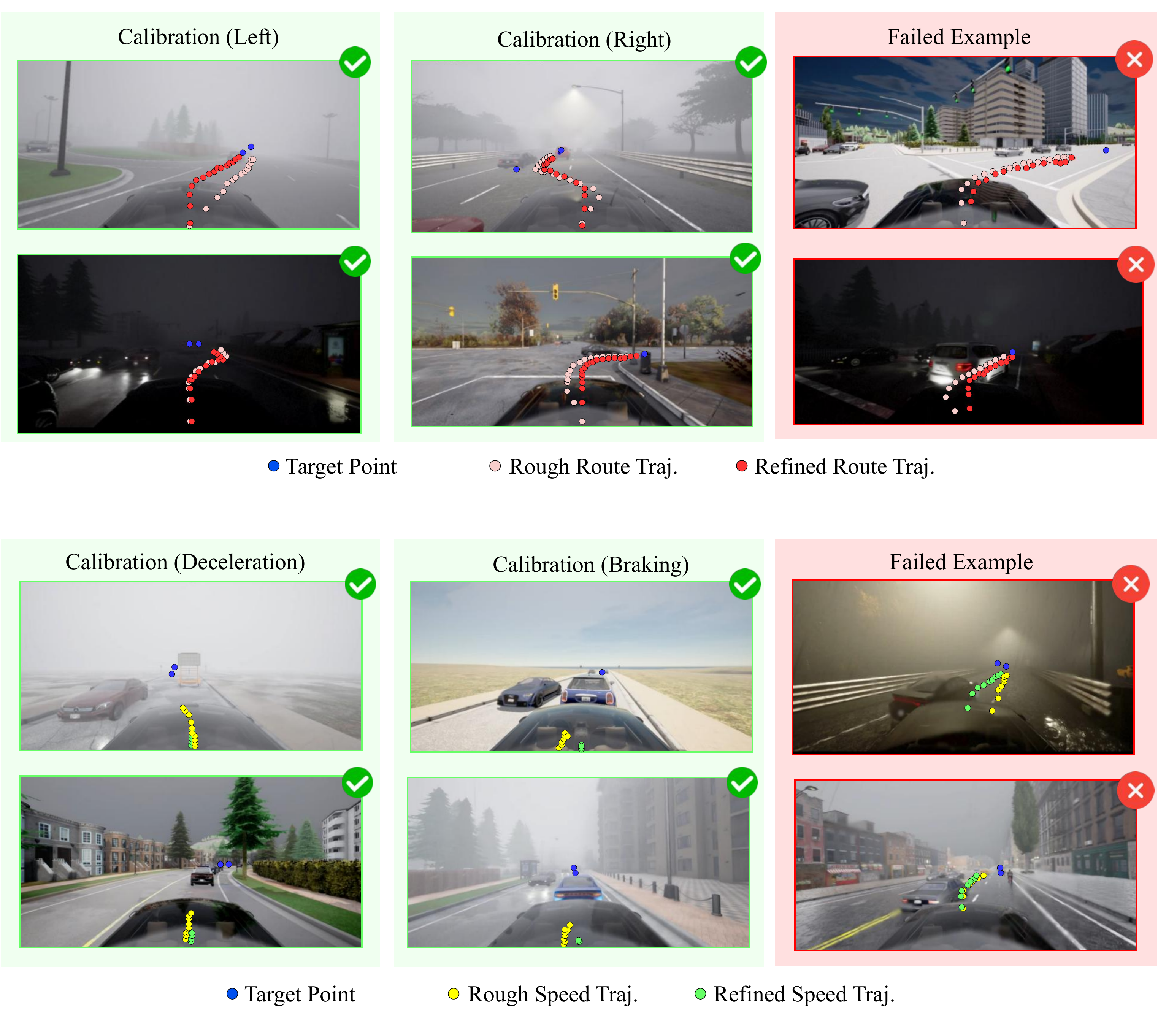} 
    \caption{More visualization results of proposed CriticVLA. The proposed CriticVLA exhibits excellent correction effects on both speed and direction, effectively rectifying issues such as direction deviation, delayed braking, or inadequate deceleration during vehicle operation. Additionally, the presented failed cases indicate that although CriticVLA intentionally performs corrective operations like avoiding other vehicles, there may still be other factors leading to driving failures.}
    \label{fig:more_vis}
\end{figure}

\begin{figure}%[t]
    \centering
    \includegraphics[width=0.99\linewidth]{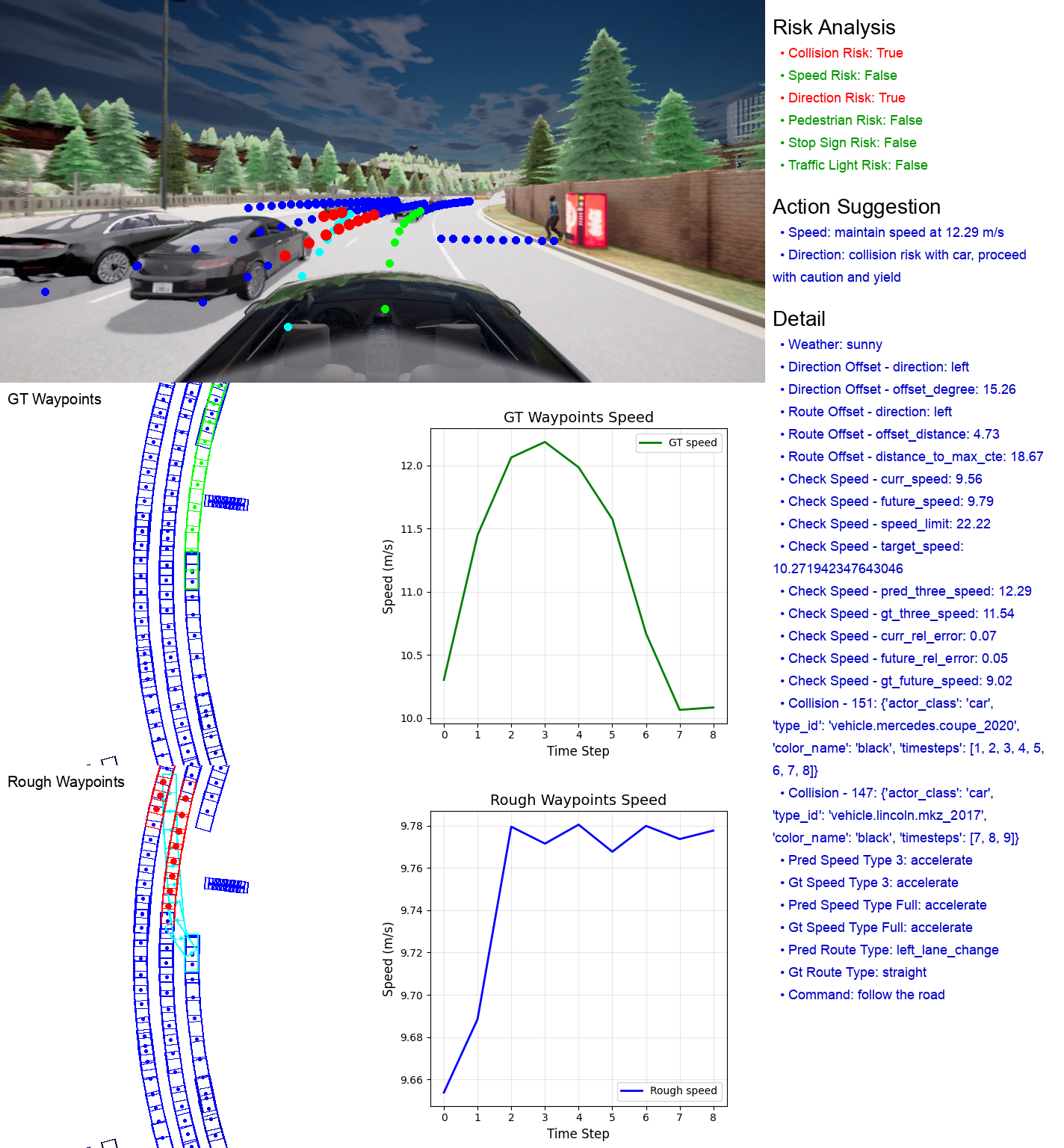}
    % \vspace{0.1cm}
    \caption{Examples of CriticDrive Dataset: collision and direction risk. \\ Left panel: driving scenario from both the egocentric front-view and the simplified Bird's-Eye View bounding box locations. Green dots represent the \textcolor{green}{\textbf{expert}} ground truth trajectory, while cyan dots indicate the \textcolor{cyan}{\textbf{rough}} trajectory. The future positions of surrounding \textcolor{blue}{\textbf{dynamic actors}} are marked in blue. Crucially, potential \textcolor{red}{\textbf{safety hazards}} are visualized in red, marking specific points where the rough trajectory poses a collision risk with other traffic participants. Additionally, the bottom section illustrates the velocity profiles, comparing the speed evolution of the expert and rough trajectories over time. \\
    Right panel: generated reasoning derived based on method in~\ref{sec:CriticDrive_construct}. Detected risks are marked in red, otherwise marked in green.}
    \label{fig:CriticDrive1}
\end{figure}

\begin{figure}%[t]
    \centering
    \includegraphics[width=0.99\linewidth]{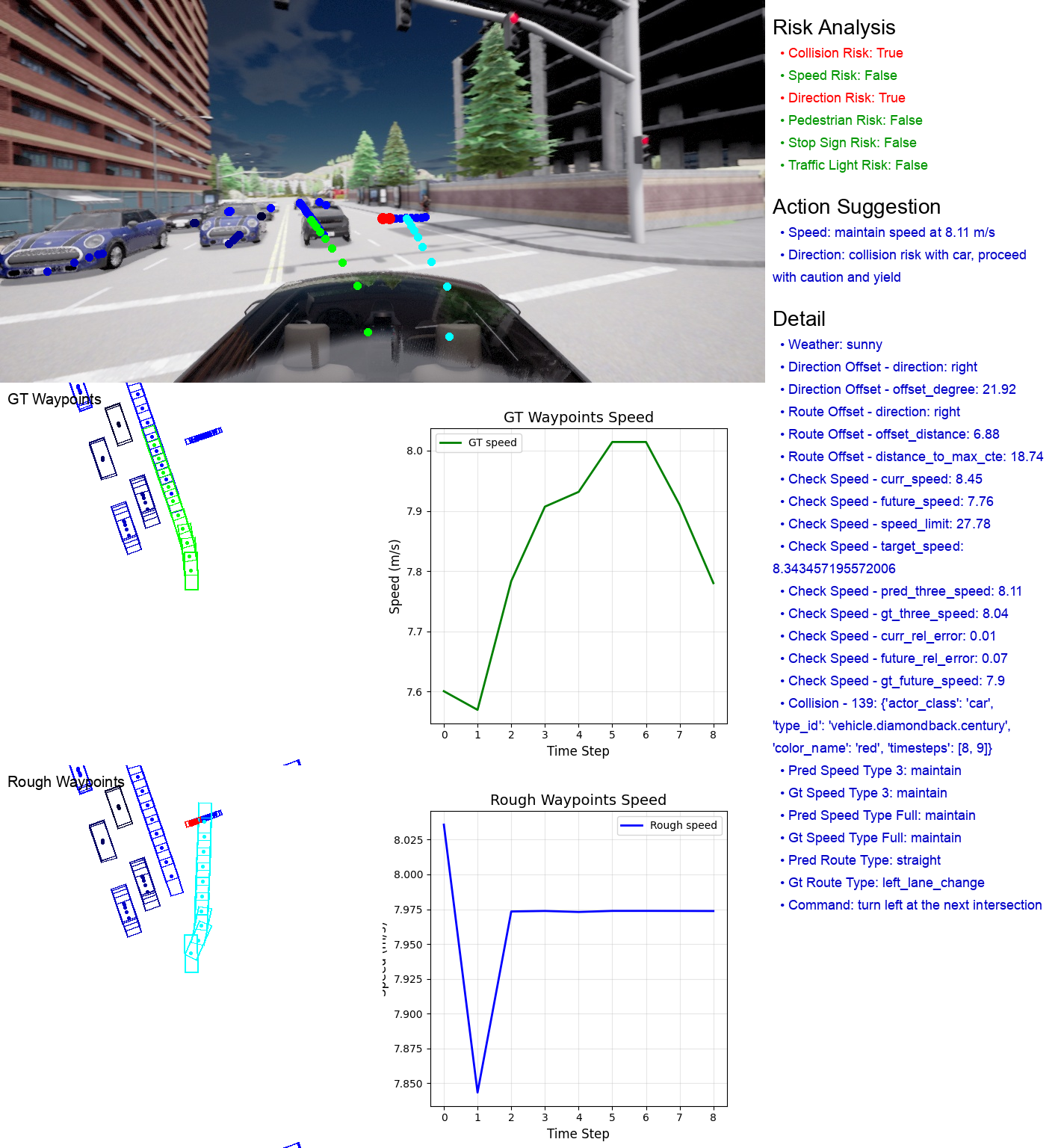}
    % \vspace{-0.3cm}
    \caption{Examples of CriticDrive Dataset: collision and direction risk. \\ Left panel: driving scenario from both the egocentric front-view and the simplified Bird's-Eye View bounding box locations. Green dots represent the \textcolor{green}{\textbf{expert}} ground truth trajectory, while cyan dots indicate the \textcolor{cyan}{\textbf{rough}} trajectory. The future positions of surrounding \textcolor{blue}{\textbf{dynamic actors}} are marked in blue. Crucially, potential \textcolor{red}{\textbf{safety hazards}} are visualized in red, marking specific points where the rough trajectory poses a collision risk with other traffic participants. Additionally, the bottom section illustrates the velocity profiles, comparing the speed evolution of the expert and rough trajectories over time. \\
    Right panel: generated reasoning derived based on method in~\ref{sec:CriticDrive_construct}. Detected risks are marked in red, otherwise marked in green.}
    \label{fig:CriticDrive2}
\end{figure}

\begin{figure}%[t]
    \centering
    \includegraphics[width=0.99\linewidth]{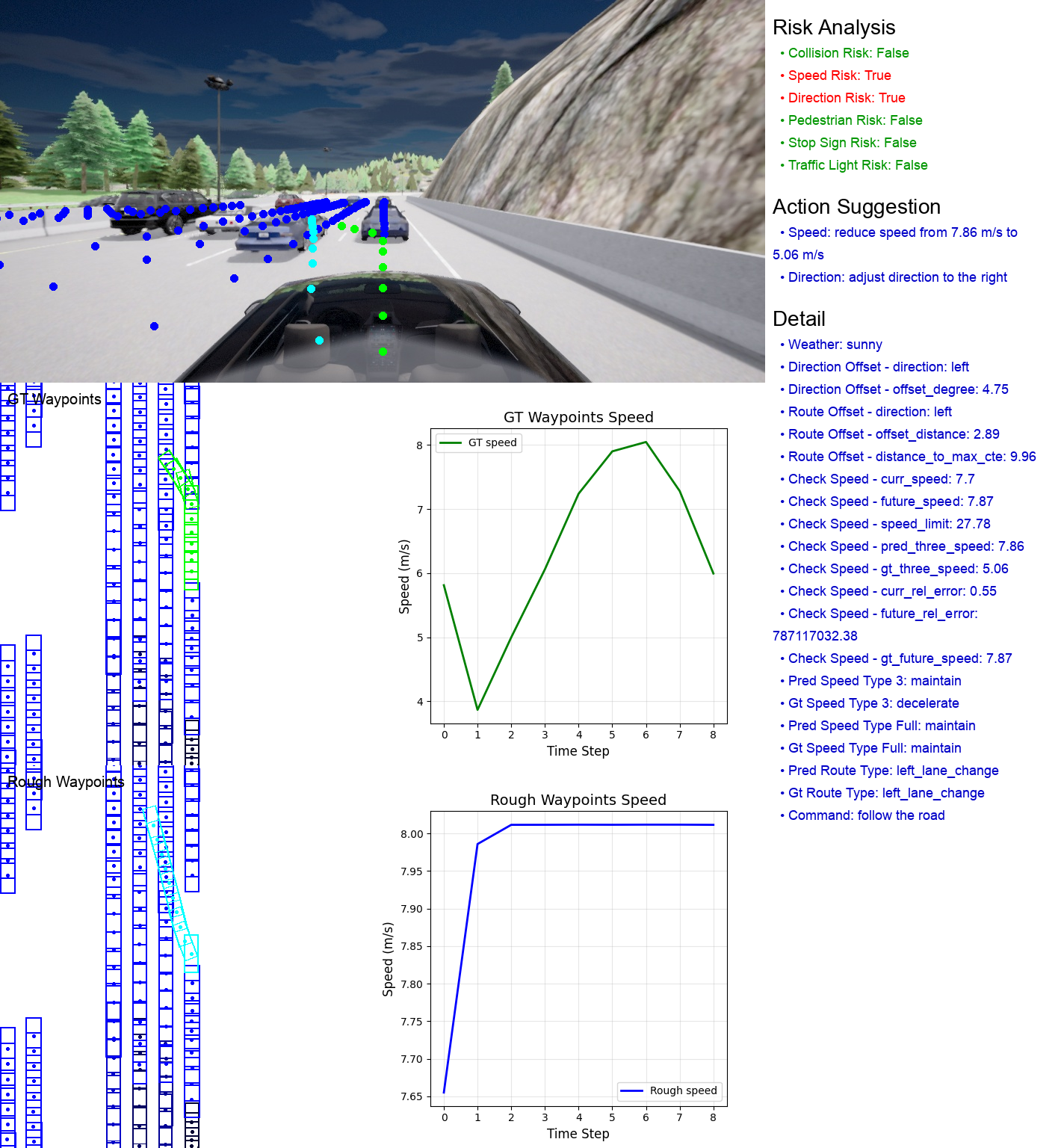}
    % \vspace{-0.3cm}
    \caption{Examples of CriticDrive Dataset: speed and direction risk. \\ Left panel: driving scenario from both the egocentric front-view and the simplified Bird's-Eye View bounding box locations. Green dots represent the \textcolor{green}{\textbf{expert}} ground truth trajectory, while cyan dots indicate the \textcolor{cyan}{\textbf{rough}} trajectory. The future positions of surrounding \textcolor{blue}{\textbf{dynamic actors}} are marked in blue. Crucially, potential \textcolor{red}{\textbf{safety hazards}} are visualized in red, marking specific points where the rough trajectory poses a collision risk with other traffic participants. Additionally, the bottom section illustrates the velocity profiles, comparing the speed evolution of the expert and rough trajectories over time. \\
    Right panel: generated reasoning derived based on method in~\ref{sec:CriticDrive_construct}. Detected risks are marked in red, otherwise marked in green.}
    \label{fig:CriticDrive3}
\end{figure}

\begin{figure}%[t]
    \centering
    \includegraphics[width=0.99\linewidth]{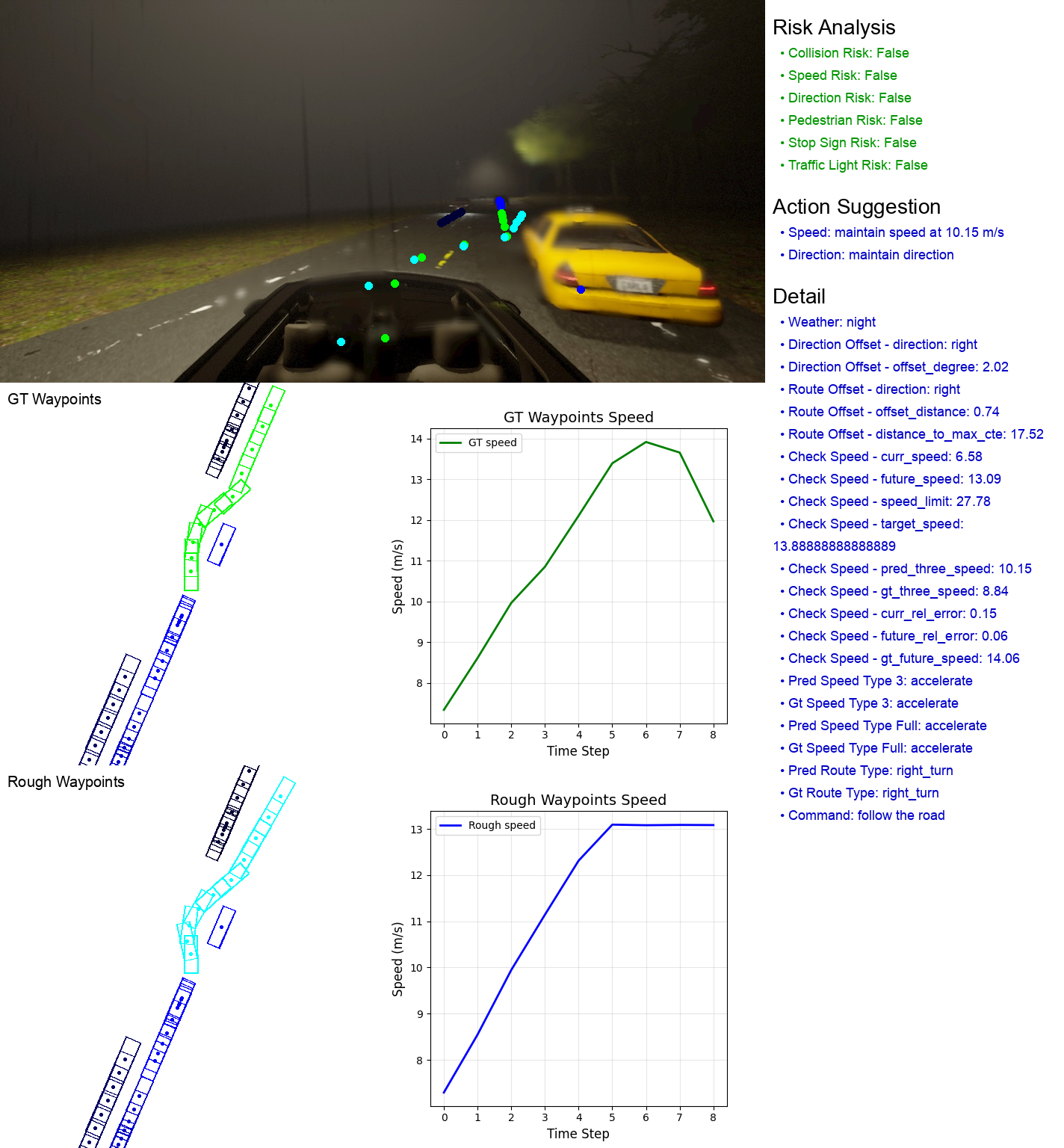}
    % \vspace{-0.3cm}
    \caption{Examples of CriticDrive Dataset: no risk detected. \\ Left panel: driving scenario from both the egocentric front-view and the simplified Bird's-Eye View bounding box locations. Green dots represent the \textcolor{green}{\textbf{expert}} ground truth trajectory, while cyan dots indicate the \textcolor{cyan}{\textbf{rough}} trajectory. The future positions of surrounding \textcolor{blue}{\textbf{dynamic actors}} are marked in blue. Crucially, potential \textcolor{red}{\textbf{safety hazards}} are visualized in red, marking specific points where the rough trajectory poses a collision risk with other traffic participants. Additionally, the bottom section illustrates the velocity profiles, comparing the speed evolution of the expert and rough trajectories over time. \\
    Right panel: generated reasoning derived based on method in~\ref{sec:CriticDrive_construct}. Detected risks are marked in red, otherwise marked in green.}
    \label{fig:CriticDrive4}
\end{figure}

\end{document}